\documentclass{article} 
\usepackage{nips12submit_e,times}


\usepackage{amsmath,amsthm,amssymb}

\usepackage{color}
\definecolor{Blue}{rgb}{0.9,0.3,0.3}

\newcommand{\squishlist}{
   \begin{list}{$\bullet$}
    { \setlength{\itemsep}{0pt}      \setlength{\parsep}{3pt}
      \setlength{\topsep}{3pt}       \setlength{\partopsep}{0pt}
      \setlength{\leftmargin}{1.5em} \setlength{\labelwidth}{1em}
      \setlength{\labelsep}{0.5em} } }

\newcommand{\squishlisttwo}{
   \begin{list}{$\bullet$}
    { \setlength{\itemsep}{0pt}    \setlength{\parsep}{0pt}
      \setlength{\topsep}{0pt}     \setlength{\partopsep}{0pt}
      \setlength{\leftmargin}{2em} \setlength{\labelwidth}{1.5em}
      \setlength{\labelsep}{0.5em} } }

\newcommand{\squishend}{
    \end{list}  }

\newcommand{\defeq}{:=}
\newcommand{\myvec}[1]{\mathbf{#1}}
\newcommand{\myvecsym}[1]{\boldsymbol{#1}}
\newcommand{\ind}[1]{\mathbb{I}(#1)}

\newcommand{\vmu}{\myvecsym{\mu}}

\newcommand{\vphi}{\myvecsym{\phi}}

\newcommand{\vw}{\myvec{w}}

\newcommand{\vx}{\myvec{x}}

\newcommand{\vy}{\myvec{y}}

\newcommand{\vJ}{\myvec{J}}

\newcommand{\vX}{\myvec{X}}











\newcommand{\calX}{{\cal X}}





\newcommand{\be}{\begin{equation}}
\newcommand{\ee}{\end{equation}}
\newcommand{\bea}{\begin{eqnarray}}
\newcommand{\eea}{\end{eqnarray}}
\newcommand{\beaa}{\begin{eqnarray*}}
\newcommand{\eeaa}{\end{eqnarray*}}


\DeclareMathAlphabet{\mathpzc}{OT1}{pzc}{m}{n}




\usepackage{graphicx} 
\usepackage{algorithm}
\usepackage{algorithmic}
\usepackage{amsfonts}
\usepackage{slashbox}
\usepackage{subfigure}
\usepackage{xcolor}
\usepackage[labelfont=bf]{caption}

\DeclareMathOperator*{\argmax}{arg\,max}

\usepackage{amsthm}
\newtheorem{theorem}{Theorem}
\newtheorem{lemma}[theorem]{Lemma} 
\newtheorem{proposition}[theorem]{Proposition} 
\newtheorem{remark}[theorem]{Remark}
\newtheorem{corollary}[theorem]{Corollary}

\title{Herded Gibbs Sampling}

\author{ Luke Bornn \\
Harvard University  \\
\texttt{\small bornn@stat.harvard.edu} \\
\And
Yutian Chen \\
UC Irvine \\
\texttt{\small yutian.chen@uci.edu} \\
\And
Nando de Freitas \\
UBC \\
\texttt{\small nando@cs.ubc.ca} \\
\And
Mareija Eskelin \\
UBC \\
\texttt{\small mareija@cs.ubc.ca} \\
\And
Jing Fang \\
Facebook \\
\texttt{\small jingf@cs.ubc.ca} \\
\And
Max Welling \\
University of Amsterdam \\
\texttt{\small welling@ics.uci.edu} \\
}

%

\providecommand{\e}[1]{\ensuremath{\times 10^{#1}}}

\nipsfinalcopy 

\usepackage{lipsum}
\newcommand\blfootnote[1]{
  \begingroup
  \renewcommand\thefootnote{}\footnote{#1}
  \addtocounter{footnote}{-1}
  \endgroup
}

\newtheorem{trueDist}{Definition}
\newtheorem{hGDist}[trueDist]{Definition}

\newtheorem{thm:hGconvergence}[theorem]{Theorem}

\begin{document}

\maketitle

\begin{abstract}
The Gibbs sampler is one of the most popular algorithms for inference in statistical models. In this paper, we introduce a herding variant of this algorithm, called herded Gibbs, that is entirely deterministic. We prove that herded Gibbs has an $O(1/T)$ convergence rate for models with independent variables and for fully connected probabilistic graphical models. Herded Gibbs is shown to outperform Gibbs in the tasks of image denoising with MRFs and named entity recognition with CRFs. However, the convergence for herded Gibbs for sparsely connected probabilistic graphical models is still an open problem.
 \blfootnote{Authors are listed in alphabetical order.}  
\end{abstract}

\section{Introduction}
\label{sec:introduction}

Over the last 60 years, we have witnessed great progress in the design of randomized sampling algorithms; see for example \cite{Liu-01,Doucet-01,Andrieu-03,Robert2004} and the references therein. In contrast, the design of deterministic algorithms for ``sampling'' from distributions is still in its inception \cite{Chen:2010,holroyd2010rotor,Chen:2011, Murray2012}. There are, however, many important reasons for pursuing this line of attack on the problem. From a theoretical perspective, this is a well defined mathematical challenge whose solution might have important consequences. It also brings us closer to reconciling the fact that we typically use pseudo-random number generators to run Monte Carlo algorithms on classical, Von Neumann architecture, computers. Moreover, the theory for some of the recently proposed deterministic sampling algorithms has taught us that they can achieve $O(1/T)$ convergence rates \cite{Chen:2010,holroyd2010rotor}, which are much faster than the standard Monte Carlo rates of $O(1/\sqrt{T})$ for computing ergodic averages. From a practical perspective, the design of deterministic sampling algorithms creates an opportunity for researchers to apply a great body of knowledge on optimization to the problem of sampling; see for example \cite{Bach:2012} for an early example of this.
   
The domain of application of currently existing deterministic sampling algorithms is still very narrow. Importantly, there do not exist deterministic tools for sampling from unnormalized multivariate probability distributions. This is very limiting because the problem of sampling from unnormalized distributions is at the heart of the field of Bayesian inference and the probabilistic programming approach to artificial intelligence \cite{Lunn:2000,Carbonetto05,Milch06,Goodman:2008}. At the same time, despite great progress in Monte Carlo simulation, the celebrated Gibbs sampler continues to be one of the most widely-used algorithms. For, example it is the inference engine behind popular statistics packages \cite{Lunn:2000}, several tools for text analysis \cite{Porteous:2008}, and Boltzmann machines \cite{Ackley1985,Hinton2006}. The popularity of Gibbs stems from its simplicity of implementation and the fact that it is a very generic algorithm. 

Without any doubt, it would be remarkable if we could design generic deterministic Gibbs samplers with fast (theoretical and empirical) rates of convergence. In this paper, we take steps toward achieving this goal by capitalizing on a recent idea for deterministic simulation known as herding.
Herding \cite{Welling:2009,Welling:2009b,Gelfand:2010} is a deterministic procedure for generating samples $\vx \in {\cal X}\subseteq \mathbb{R}^n$, such that the empirical moments $\vmu$ of the data are matched. The herding procedure, at iteration $t$, is as follows:
\bea
\vx^{(t)} & = & \argmax_{\vx \in \mathcal{X}} \langle \vw^{(t-1)} , \vphi(\vx) \rangle \nonumber \\
\vw^{(t)} & = & \vw^{(t-1)} + \vmu - \vphi(\vx^{(t)}),
\label{eq:herding}
\eea
where $\vphi: {\cal X} \rightarrow {\cal H}$ is a feature map (statistic) from $\cal X$ to a Hilbert space $\cal H$ with inner product $\langle \cdot, \cdot \rangle$, $\vw \in {\cal H}$ is the vector of parameters, and $\vmu \in {\cal H}$ is the moment vector (expected value of $\vphi$ over the data) that we want to match. If we choose normalized features by making $\|\vphi(\vx)\|$ constant for all $\vx$, then the update to generate samples $\vx^{(t)}$ for $t=1,2,\ldots,T$ in Equation \ref{eq:herding} is equivalent to minimizing the objective
\be
J(\vx_{1}, \ldots, \vx_{T}) = \left\| \vmu - \frac{1}{T} \sum_{t=1}^{T} \vphi(\vx^{(t)}) \right\|^2,
\ee
where $T$ may have \emph{no prior known value} and $\| \cdot \| = \sqrt{\langle \cdot, \cdot \rangle}$ is the naturally defined norm based upon the inner product of the space $\cal H$ \cite{Chen:2010,Bach:2012}. 

Herding can be used to produce samples from \emph{normalized} probability distributions. This is done as follows. Let $\vmu$ denote a discrete, normalized probability distribution, with $\mu_i \in [0,1]$ and $\sum_{i=1}^n \mu_i =1$. A natural feature in this case is the vector $\vphi(x)$ that has all entries equal to zero, except for the entry at the position indicated by $x$. For instance, if $x=2$ and $n=5$, we have $\vphi(x) = (0,1,0,0,0)^T$. Hence, $\widehat{\vmu}=T^{-1} \sum_{t=1}^{T} \vphi(x^{(t)})$ is an empirical estimate of the distribution. 
In this case, one step of the herding algorithm involves finding the largest component of the weight vector ($i^{\star} = \argmax_{i \in \{1,2,\ldots,n\}} \vw_i^{(t-1)}$), setting $x^{(t)} = i^{\star}$, fixing the $i^{\star}$-entry of $\vphi(x^{(t)})$ to one and all other entries to zero, and updating the weight vector: $\vw^{(t)} =  \vw^{(t-1)} + (\vmu - \vphi(x^{(t)}))$. The output is a set of samples $\{x^{(1)},\ldots,x^{(T)}\}$ for which the empirical estimate $\widehat{\vmu}$ converges on the target distribution $\vmu$ as $O(1/T)$.

The herding method, as described thus far, only applies to normalized distributions or to problems where the objective is not to guarantee that the samples come from the right target, but to ensure that some moments are matched. An interpretation of herding in terms of Bayesian quadrature has been put forward recently by \cite{Huszar:2012}.

In this paper, we will show that it is possible to use herding to generate samples from more complex \emph{unnormalized} probability distributions.  In particular, we introduce a deterministic variant of the popular Gibbs sampling algorithm, which we refer to as \emph{herded Gibbs}. While Gibbs relies on drawing samples from the \emph{full-conditionals} at random, herded Gibbs generates the samples by matching the full-conditionals. That is, one simply applies herding to all the full-conditional distributions. 

The experiments will demonstrate that the new algorithm outperforms Gibbs sampling and mean field methods in the domain of sampling from sparsely connected probabilistic graphical models, such as grid-lattice Markov random fields (MRFs) for image denoising and conditional random fields (CRFs) for natural language processing. 

We advance the theory by proving that the deterministic Gibbs algorithm converges for distributions of independent variables and fully-connected probabilistic graphical models. However, a proof establishing suitable conditions that ensure convergence of herded Gibbs sampling for sparsely connected probabilistic graphical models is still unavailable.

\section{Herded Gibbs Sampling}
\label{sec:herdedGibbs}

For a graph of discrete nodes ${\cal G} = (V,E)$, where the set of nodes are the random variables $V = \{ X_{i}\}_{i=1}^{N}$, $X_{i} \in {\calX}$, let $\pi$ denote the \emph{target distribution} defined on $\cal G$.

Gibbs sampling is one of the most popular methods to draw samples from $\pi$. Gibbs alternates (either systematically or randomly) the sampling of each variable $X_i$ given $\vX_{{\cal N}(i)}=\vx_{{\cal N}(i)}$, where $i$ is the index of the node, and ${\cal N}(i)$ denotes the neighbors of node $i$. That is, Gibbs generates each sample from its full-conditional distribution $p(X_{i}|\vx_{{\cal N}(i)})$.

Herded Gibbs replaces the sampling from full-conditionals with herding at the level of the full-conditionals. That is, it alternates a process of matching the full-conditional distributions $p(X_i=x_i|\mathbf{X}_{{\cal N}(i)})$. To do this,
herded Gibbs defines a set of auxiliary weights $\{w_{i, \vx_{{\cal N}(i)}}\}$ for any value of $X_i=x_i$ and $\vX_{{\cal N}(i)}=\vx_{{\cal N}(i)}$.
For ease of presentation, we assume the domain of $X_i$ is binary, $\mathcal{X}=\{0,1\}$, and we use one weight for every $i$ and assignment to the neighbors $\mathbf{x}_{{\cal N}(i)}$.
Herded Gibbs can be trivially generalized to the multivariate setting by employing weight vectors in $\mathbb{R}^{|{\cal X}|} $ instead of scalars. 

If the binary variable $X_i$ has four binary neighbors $\vX_{{\cal N}(i)}$, we must maintain $2^4=16$ weight vectors. Only the weight vector corresponding to the current instantiation of the neighbors is updated, as illustrated in Algorithm~\ref{alg:herdedGibbs}. The memory complexity of herded Gibbs is exponential in the maximum node degree. Note the algorithm is a deterministic Markov process with state $(\mathbf{X}, \mathbf{W})$. 

\begin{algorithm}[t]
\caption{Herded Gibbs Sampling.}
\label{alg:herdedGibbs}
\begin{algorithmic}
\STATE \textbf{Input:}  $T$.
\STATE Step 1: Set $t=0$. Initialize $\mathbf{X}^{(0)}$ in the support of $\pi$ and $w_{i, \mathbf{x}_{{\cal N}(i)}}^{(0)}$ in $(\pi(X_i=1|\mathbf{x}_{{\cal N}(i)})-1,\pi(X_i=1|\mathbf{x}_{{\cal N}(i)}))$.
\FOR{$t = 1 \to T$}
\STATE Step 2: Pick a node $i$ according to some policy. Denote $w = w_{i, \mathbf{x}_{{\cal N}(i)}^{(t-1)}}^{(t-1)}$.
\STATE Step 3: If $w > 0$, set $x_i^{(t)}=1$, otherwise set $x_i^{(t)}=0$.
\STATE Step 4: Update weight $w_{i, \mathbf{x}_{{\cal N}(i)}^{(t)}}^{(t)} = w_{i, \mathbf{x}_{{\cal N}(i)}^{(t-1)}}^{(t-1)} + \pi(X_i=1|\mathbf{x}_{{\cal N}(i)}^{(t-1)}) -  x_i^{(t)}$
\STATE Step 5: Keep the values of all the other nodes $x_j^{(t)}=x_j^{(t-1)}, \forall j\neq i$ and all the other weights $w_{j, \mathbf{x}_{{\cal N}(j)}}^{(t)} = w_{j, \mathbf{x}_{{\cal N}(j)}}^{(t-1)}, \forall j \neq i \textrm{ or } \mathbf{x}_{{\cal N}(j)} \neq \mathbf{x}_{{\cal N}(i)}^{(t-1)}$.
\ENDFOR
\STATE \textbf{Output:} $\vx^{(1)},\ldots,\vx^{(T)}$
\end{algorithmic}
\end{algorithm}

The initialization in step 1 guarantees that $\mathbf{X}^{(t)}$ always remains in the support of $\pi$. For a deterministic scan policy in step 2, we take the value of variables $\mathbf{x}^{(tN)},t\in\mathbb{N}$ as a sample sequence. Throughout the paper all experiments employ a fixed variable traversal for sample generation. We call one such traversal of the variables a \emph{sweep}. 


\section{Analysis}
\label{sec:analysis}


As herded Gibbs sampling is a deterministic algorithm, there is no stationary probability distribution of states.  Instead, we examine the average of the sample states over time and hypothesize that it converges to the joint distribution, our target distribution, $\pi$. To make the treatment precise, we need the following definition:

\begin{hGDist}
For a graph of discrete nodes ${\cal G} = (V,E)$, where the set of nodes $V = \{ X_{i}\}_{i=1}^{N}$, $X_{i} \in {\calX}$, $P_{T}^{(\tau)}$ is the empirical estimate of the joint distribution obtained by averaging over $T$ samples acquired from $\cal G$.
$P_{T}^{(\tau)}$ is derived from $T$ samples, collected at the end of every sweep over $N$ variables, starting from the $\tau^{th}$ sweep:
\bea
P_{T}^{(\tau)} (\mathbf{X} = \mathbf{x}) = \frac{1}{T} \sum_{k = \tau}^{\tau + T -1} \ind{ \mathbf{X}^{(k N)} = \mathbf{x} }
\label{eqn:hGDist}
\eea
\label{def:hGDist}
\end{hGDist}


Our goal is to prove that the limiting average sample distribution over time converges to the target distribution $\pi$. Specifically, we want to show the following: 
\bea
\lim_{T\rightarrow \infty} P_T^{(\tau)} \left( \vx \right) = \pi(\vx), \forall \tau\geq 0
\label{eq:herdingGoal}
\eea
If this holds, we also want to know what the convergence rate is.


We begin the theoretical analysis with a graph of one binary variable. For this graph, there is only one weight $w$. Denote $\pi(X=1)$ as $\pi$ for notational simplicity. The sequence of $X$ is determined by the dynamics of $w$ (shown in Figure~\ref{fig:univariate_1}): 
\bea
w^{(t)} =  w^{(t-1)} + \pi - \ind{w^{(t-1)} > 0},\quad X^{(t)} = \left\{\begin{array}{rl}
1 & \mbox {if $w^{(t-1)} > 0$} \\
0 & \mbox {otherwise}
\end{array}\right.
\label{eq:singleNodeDynamicsA}
\eea
Lemma \ref{lem:single_node} in the appendix shows that $(\pi-1,\pi]$ is the invariant interval of the dynamics, and the state $X=1$ is visited at a frequency close to $\pi$ with an error:
\begin{equation}
|P_T^{(\tau)}(X=1)-\pi|\leq \frac{1}{T}
\end{equation}
This is known as the fast moment matching property in \cite{Welling:2009,Welling:2009b,Gelfand:2010}. We will show in the next two theorems that the fast moment matching property also holds for two special types of graphs, with proofs provided in the appendix.





\begin{figure}[h!]
 \centering
 \includegraphics[width=.46\textwidth] {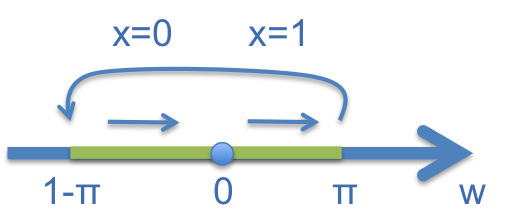}
 \caption{Herding dynamics for a single variable.}
 \label{fig:univariate_1}
\end{figure}

In an empty graph, all the variables are independent of each other and herded Gibbs reduces to running $N$ one-variable chains in parallel. Denote the marginal distribution $\pi_i\defeq\pi(X_i=1)$. 

Examples of failing convergence in the presence of rational ratios between the $\pi_i$s were observed in \cite{Bach:2012}. There the need for further theoretical research on this matter was pointed out. The following theorem provides formal conditions for convergence in the restricted domain of empty graphs. 

\begin{theorem} \label{thm:convergence_empty_graph}
For an empty graph, when herded Gibbs has a fixed scanning order, and $\{1,\pi_1,\dots,\pi_N\}$ are rationally independent, the empirical distribution $P^{(\tau)}_T$ converges to the target distribution $\pi$ as $T\rightarrow \infty$ for any $\tau\geq 0$.
\end{theorem}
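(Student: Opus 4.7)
My plan is to exploit the fact that in an empty graph the variables are independent, so herded Gibbs decomposes into $N$ non-interacting single-variable herding chains, and to reduce the joint dynamics to an irrational rotation on the $N$-torus, to which I can apply Weyl's equidistribution theorem.

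First I would observe that under a fixed scan order (say $1,2,\dots,N$), updating variable $i$ in an empty graph depends on no other $X_j$, and affects only the weight $w_i$. Consequently, if $\omega_i^{(k)}$ denotes the value of $w_i$ just before its $k$-th update (equivalently, its state at the beginning of sweep $k+1$), then by the single-node dynamics \eqref{eq:singleNodeDynamicsA} and Lemma~\ref{lem:single_node} each $\omega_i^{(k)}$ lies in the invariant interval $(\pi_i-1,\pi_i]$ and evolves as
\[
\omega_i^{(k+1)} \;=\; \omega_i^{(k)} + \pi_i - \mathbf{1}\!\left[\omega_i^{(k)}>0\right].
\]
Identifying the interval $(\pi_i-1,\pi_i]$ with the circle $\mathbb{R}/\mathbb{Z}$ via $\omega_i \mapsto \omega_i \bmod 1$, this map is precisely the rotation $R_{\pi_i}:\theta\mapsto \theta+\pi_i\bmod 1$. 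Hence $\omega_i^{(k)} = \omega_i^{(0)} + k\pi_i \bmod 1$, and the joint vector $\vomega^{(k)}=(\omega_1^{(k)},\dots,\omega_N^{(k)})$ evolves on the torus $\mathbb{T}^N$ by the single rotation vector $(\pi_1,\dots,\pi_N)$.

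Second, I would rewrite the sample event in terms of this torus. At the end of sweep $k$, the value of variable $i$ was set during its visit in that sweep, so $X_i^{(kN)} = \mathbf{1}[\omega_i^{(k-1)} > 0]$. In circle coordinates, the event $\{X_i^{(kN)}=1\}$ corresponds to $\omega_i^{(k-1)}\in(0,\pi_i]$ (an arc of length $\pi_i$), and $\{X_i^{(kN)}=0\}$ to an arc of length $1-\pi_i$. Therefore the joint event $\{\vX^{(kN)}=\vx\}$ equals $\{\vomega^{(k-1)}\in B(\vx)\}$ where $B(\vx)\subset\mathbb{T}^N$ is a product of arcs with Lebesgue measure $\prod_i \pi_i^{x_i}(1-\pi_i)^{1-x_i}=\pi(\vx)$.

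Third, I invoke Weyl's equidistribution theorem: since $\{1,\pi_1,\dots,\pi_N\}$ are rationally independent, the orbit $\{k(\pi_1,\dots,\pi_N)\bmod 1 : k\in\mathbb{N}\}$ is equidistributed in $\mathbb{T}^N$, and hence so is any shifted orbit starting at $\vomega^{(0)}$. Applied to the indicator of the product set $B(\vx)$ (whose boundary has Lebesgue measure zero), equidistribution gives
\[
\lim_{T\to\infty} P_T^{(\tau)}(\vx) \;=\; \lim_{T\to\infty}\frac{1}{T}\sum_{k=\tau}^{\tau+T-1}\mathbf{1}\!\left[\vomega^{(k-1)}\in B(\vx)\right] \;=\; \mathrm{Leb}(B(\vx)) \;=\; \pi(\vx),
\]
uniformly in the starting phase $\tau$. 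Summing (or taking each $\vx\in\{0,1\}^N$ separately) yields the claim.

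The one genuinely delicate step is the identification of the single-node dynamics with a rigid circle rotation --- I need to make sure the piecewise-affine map $w\mapsto w+\pi-\mathbf{1}[w>0]$ really does become the rotation $\theta\mapsto\theta+\pi\bmod 1$ under the correct identification of $(\pi-1,\pi]$ with $\mathbb{R}/\mathbb{Z}$, and that the event $\{w>0\}$ pulls back to an arc of length exactly $\pi$. Everything else is a routine application of Weyl's theorem, and the rational-independence hypothesis is used precisely to rule out the ``rational ratio'' counterexamples of \cite{Bach:2012}, where the orbit on the torus would collapse to a proper subtorus of smaller measure.
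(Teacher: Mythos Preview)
Your proposal is correct and follows essentially the same route as the paper: decouple the empty-graph dynamics into $N$ independent single-variable chains, identify each $(\pi_i-1,\pi_i]$ with $\mathbb{R}/\mathbb{Z}$ via $w\mapsto w\bmod 1$ so that the weight update becomes the rigid rotation by $\pi_i$, and then apply the Kronecker--Weyl equidistribution theorem under the rational-independence hypothesis to conclude that the orbit visits each product box $B(\vx)$ with asymptotic frequency $\prod_i\pi_i^{x_i}(1-\pi_i)^{1-x_i}=\pi(\vx)$. The ``delicate step'' you flag --- that the piecewise-affine map really is a rotation and that $\{w>0\}$ becomes an arc of length $\pi_i$ --- is exactly the content of the paper's Equation~\eqref{eq:singleNodeDynamicsB} and the definition of $A_i(x)$, so you have identified the right checkpoint.
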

A set of $n$ real numbers, $x_1, x_2, \dots, x_n$, is said to be rationally independent if for any set of rational numbers, $a_1, a_2, \dots, a_n$, we have $\sum_{i=1}^n a_i x_i = 0 \Leftrightarrow a_i = 0, \forall 1\leq i \leq n$. The proof of Theorem \ref{thm:convergence_empty_graph} consists of first formulating the dynamics of the weight vector as a constant translation mapping in a circular unit cube, and then proving that the weights are uniformly distributed by making use of Kronecker-Weyl's theorem \cite{weyl}.

For fully-connected (complete) graphs, convergence is guaranteed even with rational ratios. In fact, herded Gibbs converges to the target joint distribution at a rate of $O(1/T)$ with a $O(\log(T))$ burn-in period. This statement is formalized in Theorem~\ref{thm:convergenceHerdedGibbs}. 

\begin{thm:hGconvergence}
For a fully-connected graph, when herded Gibbs has a fixed scanning order and a Dobrushin coefficient of the corresponding Gibbs sampler $\eta<1$, there exist constants $l > 0$, and $B > 0$ such that
\bea
d_{v}(P^{(\tau)}_{T} - \pi) \leq \frac{\lambda}{T}, \forall T \geq T^{*}, \tau > \tau^{*}(T)
\label{eqn:hGconvergence}
\eea
where $\lambda = \frac{2N(1 + \eta)}{l (1 - \eta)}$, $T^* = \frac{2B}{l}$, $\tau^{*}(T) = \log_{\frac{2}{1+\eta}}\left( \frac{(1 - \eta)l T}{4N} \right)$, and $d_{v}(\delta \pi) \defeq \frac{1}{2} || \delta \pi||_{1}$.
\label{thm:convergenceHerdedGibbs}
\end{thm:hGconvergence}


The constants $l$ and $B$ are defined in Equation \ref{eqn:lB} for Proposition \ref{prop:reachability} in the appendix. If we ignore the burn-in period and start collecting samples simply from the beginning, we achieve a convergence rate of $O(\frac{\log(T)}{T})$ as stated in Corollary \ref{cor:logT_T_convergence} in the appendix. The constant $l$ in the convergence rate has an exponential term, with $N$ in the exponent. An exponentially large constant seems to be unavoidable for any sampling algorithm when considering the convergence to a joint distribution with $2^N$ states. As for the marginal distributions, it is obvious that the convergence rate of herded Gibbs is also $O(1/T)$ because marginal probabilities are linear functions of the joint distribution. However, in practice, we observe very rapid convergence results for the marginals, so stronger theoretical results about the convergence of the marginal distributions seem plausible.



The proof proceeds by first bounding the discrepancy between the chain of empirical estimates of the joint obtained by averaging over $T$ herded Gibbs samples, $\{P^{(s)}_{T}\}, s\geq \tau$, and a Gibbs chain initialized at $P^{(\tau)}_{T}$. After one iteration, this discrepancy is bounded above by $O(1/T)$. 

The Gibbs chain has geometric convergence to $\pi$ and the distance between the Gibbs and herded Gibbs chains is bounded by $O(1/T)$.  The geometric convergence rate to $\pi$ dominates the discrepancy of herded Gibbs and thus we infer that $P^{(\tau)}_{T}$ converges to $\pi$ geometrically. To round-off the proof, we must find a limiting value for $\tau$.  The proof concludes with an $O(\log(T))$ burn-in for $\tau$.

However, for a generic graph we have no mathematical guarantees on the convergence rate of herded Gibbs. In fact, one can easily construct synthetic examples for which herded Gibbs does not seem to converge to the true marginals and joint distribution.  For the examples covered by our theorems and for examples with real data, herded Gibbs demonstrates good behaviour. 
The exact conditions under which herded Gibbs converges for sparsely connected graphs are still unknown.

\section{Experiments}
\label{sec:experiments}

\subsection{Simple Complete Graph}
\label{sec:toy}

We begin with an illustration of how herded Gibbs substantially outperforms Gibbs on a simple complete graph. In particular, we consider a fully-connected model of two variables, $X_{1}$ and $X_{2}$, as shown in Figure~\ref{fig:twoNodes}; the joint distribution of these variables is shown in Table~\ref{table:twoNodes}. Figure~\ref{fig:twoNodesMarginals} shows the marginal distribution $P(X_1=1)$ approximated by both Gibbs and herded Gibbs for different $\epsilon$. As $\epsilon$ decreases, both approaches require more iterations to converge, but herded Gibbs clearly outperforms Gibbs. The figure also shows that Herding does indeed exhibit a linear convergence rate.

\begin{minipage}{\textwidth}
\begin{minipage}[b]{0.39\textwidth}
  \centering

\begingroup
  \makeatletter
  \providecommand\color[2][]{%
    \errmessage{(Inkscape) Color is used for the text in Inkscape, but the package 'color.sty' is not loaded}
    \renewcommand\color[2][]{}%
  }
  \providecommand\transparent[1]{%
    \errmessage{(Inkscape) Transparency is used (non-zero) for the text in Inkscape, but the package 'transparent.sty' is not loaded}
    \renewcommand\transparent[1]{}%
  }
  \providecommand\rotatebox[2]{#2}
  \ifx\svgwidth\undefined
    \setlength{\unitlength}{83.56838471pt}
  \else
    \setlength{\unitlength}{\svgwidth}
  \fi
  \global\let\svgwidth\undefined
  \makeatother
  \begin{picture}(1,0.31233802)%
    \put(0,0){\includegraphics[width=\unitlength]{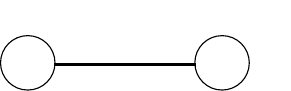}}%
    \put(0.0319586,0.23960567){\color[rgb]{0,0,0}\makebox(0,0)[lb]{\smash{$X_1$}}}%
    \put(0.71164147,0.23413544){\color[rgb]{0,0,0}\makebox(0,0)[lb]{\smash{$X_2$}}}%
  \end{picture}%
\endgroup

  \captionof{figure}{Two-variable model.}
  \label{fig:twoNodes}
\end{minipage}
\hfill
\begin{minipage}[b]{0.59\textwidth}
  \centering
  \begin{tabular}{c|cc|c}
		&$\bf X_1=0$	&$\bf X_1=1$	&$\bf P(X_2)$\\
  \hline
  $\bf X_2=0$	&$1/4-\epsilon$	&$\epsilon$	&$1/4$\\
  $\bf X_2=1$	&$\epsilon$	&$3/4-\epsilon$	&$3/4$\\
  \hline
  $\bf P(X_1)$	&$1/4$		&$3/4$		&$1$
  \end{tabular}
  \captionof{table}{Joint distribution of the two-variable model.}
  \label{table:twoNodes}
\end{minipage}
\end{minipage}

\begin{figure}[h!]
\begin{center}
	\subfigure[Approximate marginals obtained via Gibbs (blue) and herded Gibbs (red).]{
                 \includegraphics[width=\textwidth]{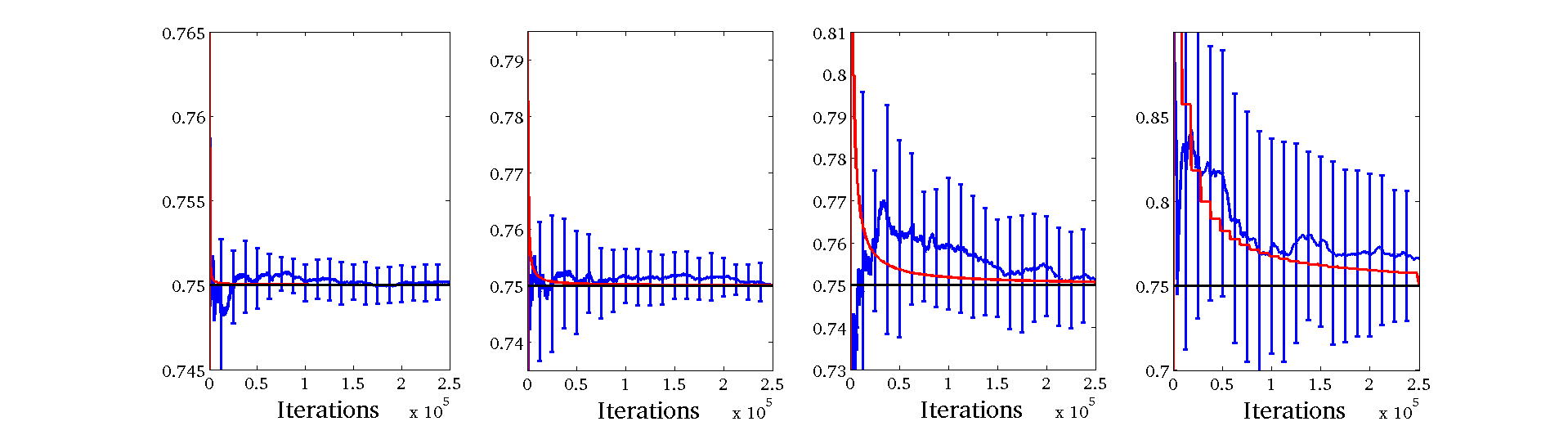}
\label{fig:twoNodesMarginalsSubsA}
           }
\subfigure[Log-log plot of marginal approximation errors obtained via Gibbs (blue) and herded Gibbs (red). ]{
                 \includegraphics[width=\textwidth]{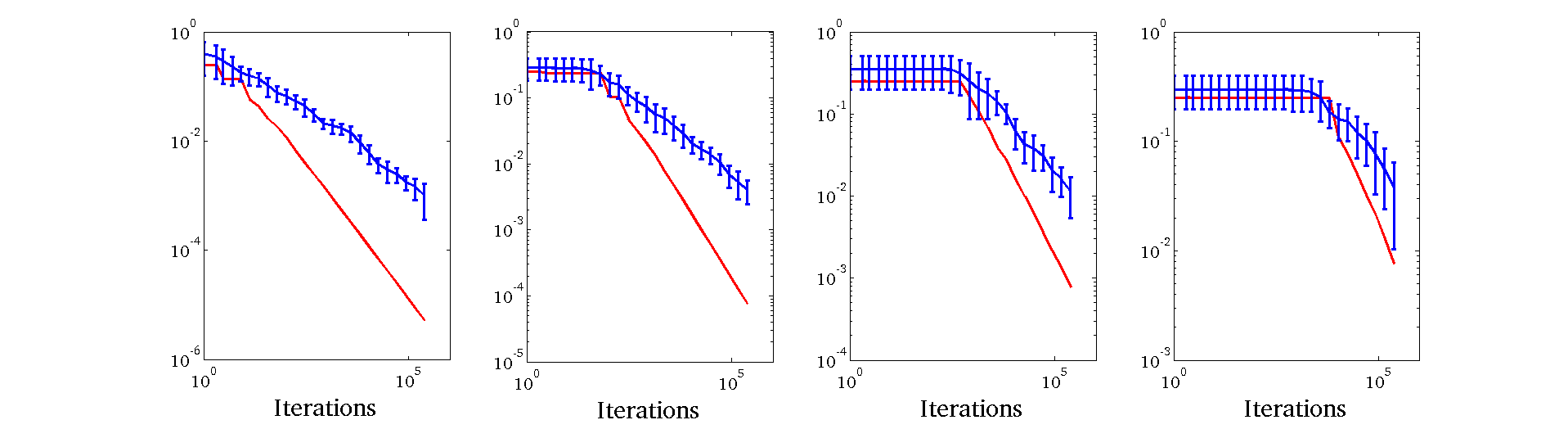}
\label{fig:twoNodesMarginalsSubsB}
           }
\subfigure[Inverse of marginal approximation errors obtained via Gibbs (blue) and herded Gibbs (red).]{
                 \includegraphics[width=\textwidth]{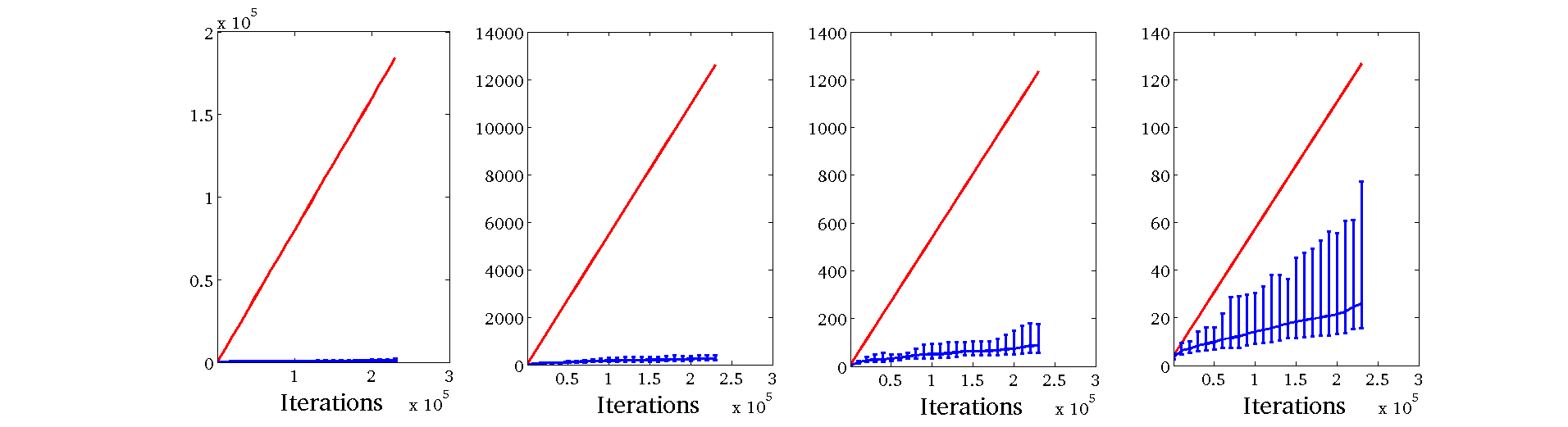}
\label{fig:twoNodesMarginalsSubsC}
           }

\end{center}
\caption{(a) Approximating a marginal distribution with Gibbs (blue) and herded Gibbs (red) for an MRF of two variables, constructed so as to make the move from state $(0,0)$ to $(1,1)$ progressively more difficult as $\epsilon$ decreases. The four columns, from left to right, are for $\epsilon = 0.1$, $\epsilon = 0.01$, $\epsilon = 0.001$ and $\epsilon = 0.0001$.
 Table~\ref{table:twoNodes} provides the joint distribution for these variables. The error bars for Gibbs correspond to one standard deviation.
 Rows (b) and (c) illustrate that the empirical convergence rate of herded Gibbs matches the expected theoretical rate. In the plots of rows (b) and (c), the upper-bound in the error of herded Gibbs was used to remove the oscillations so as to illustrate the behaviour of the algorithm more clearly.}
\label{fig:twoNodesMarginals}
\end{figure}

\subsection{MRF for Image Denoising}
\label{sec:denoising}

Next, we consider the standard setting of a grid-lattice MRF for image denoising.
Let us assume that we have a binary image corrupted by noise, and that we want to infer the original clean image. Let $X_i \in \{-1,+1\}$ denote the unknown true value of pixel $i$, and $y_i$ the observed, noise-corrupted value of this pixel.
We take advantage of the fact that neighboring pixels are likely to have the
same label by defining an MRF with an Ising prior. That is, we specify a rectangular 2D lattice
with the following pair-wise clique potentials:
\be
\psi_{ij}(x_i,x_j)
=
\begin{pmatrix}
e^{J_{ij}} & e^{-J_{ij}} \\
e^{-J_{ij}} & e^{J_{ij}}
\end{pmatrix}
\label{eqn:isingPot}
\ee
and joint distribution:
\be
p(\vx|\vJ) = \frac{1}{Z(\vJ)} \prod_{i \sim j} \psi_{ij}(x_i,x_j)
 = \frac{1}{Z(\vJ)} \exp\left(\frac{1}{2}\sum_{i \sim j} J_{ij} x_i x_j\right),
\label{eqn:isingPotJoint}
\ee
where $i \sim j$ is used to indicate that nodes $i$ and $j$ are connected.
The known parameters $J_{ij}$ establish the coupling strength between nodes $i$ and $j$.
Note that the matrix $\vJ$ is symmetric. If all the $J_{ij}>0$, then neighboring pixels are likely to be in the
same state.

The MRF model combines the Ising prior with a likelihood model as follows:
\bea
p(\vx,\vy) &=& p(\vx) p(\vy|\vx)
 = \left[ \frac{1}{Z}  \prod_{i \sim j} \psi_{ij}(x_i,x_j) \right].
\left[\prod_i p(y_i|x_i) \right]
\eea
The potentials $\psi_{ij}$ encourage label smoothness.
The likelihood terms
$p(y_i|x_i)$ are conditionally independent (e.g. Gaussians with known variance $\sigma^2$ and
mean $\vmu$ centered at each value of $x_i$, denoted $\mu_{x_i}$). In more precise terms,
\be
p(\vx,\vy|\vJ, \vmu, \sigma) 
 = \frac{1}{Z(\vJ,\vmu,\sigma)} \exp\left(\frac{1}{2}\sum_{i \sim j} J_{ij} x_i x_j - \frac{1}{2\sigma^2}\sum_{i}(y_i-\mu_{x_i})^2 \right).
\ee

When the coupling parameters $J_{ij}$ are identical, say $J_{ij}=J$, we have $\sum_{ij} J_{ij} f(x_i,x_j) = J \sum_{ij} f(x_i,x_j)$. Hence, different neighbor configurations result in the same value of $J \sum_{ij} f(x_i,x_j)$. If we store the conditionals for configurations with the same sum together, we only need to store as many conditionals as different possible values that the sum could take. This enables us to develop a shared version of herded Gibbs that is more memory efficient where we only maintain and update weights for \emph{distinct states} of the Markov blanket of each variable. 

In this exemplary image denoising experiment, noisy versions of the binary image, seen in Figure~\ref{fig:denoise} (left), were created through the addition of Gaussian noise, with varying $\sigma$. 
Figure~\ref{fig:denoise} (right) shows a corrupted image with $\sigma=4$. 
The $L_2$ 
reconstruction errors as a function of the number of iterations, for this example, are shown in
 Figure~\ref{fig:convergence}. The plot compares the herded Gibbs method against Gibbs and two versions of mean field with different damping factors~\cite{Murphy:2012}. The results demonstrate that the herded Gibbs techiques are among the best methods for solving this task.

\begin{figure}[t]
\begin{center}
  \includegraphics[width=0.45\textwidth]{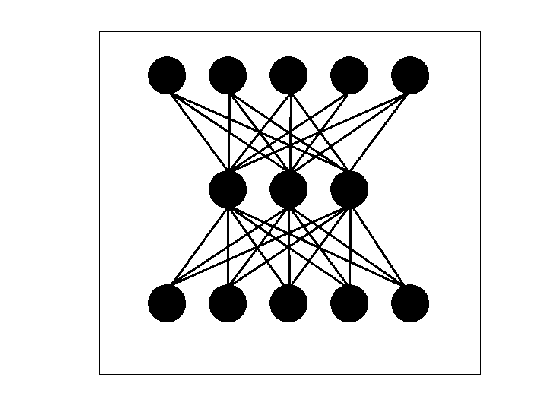}
  \includegraphics[width=0.45\textwidth]{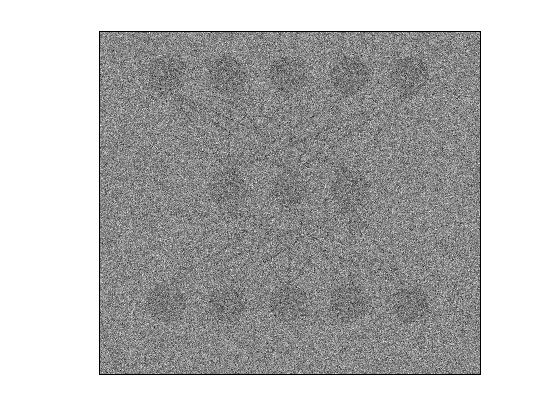}
\end{center}
\caption{Original image (left) and its corrupted version (right), with noise parameter
 $\sigma=4$.}
  \label{fig:denoise}
\end{figure}

\begin{figure}[t]
\begin{center}
\includegraphics[width=\textwidth]{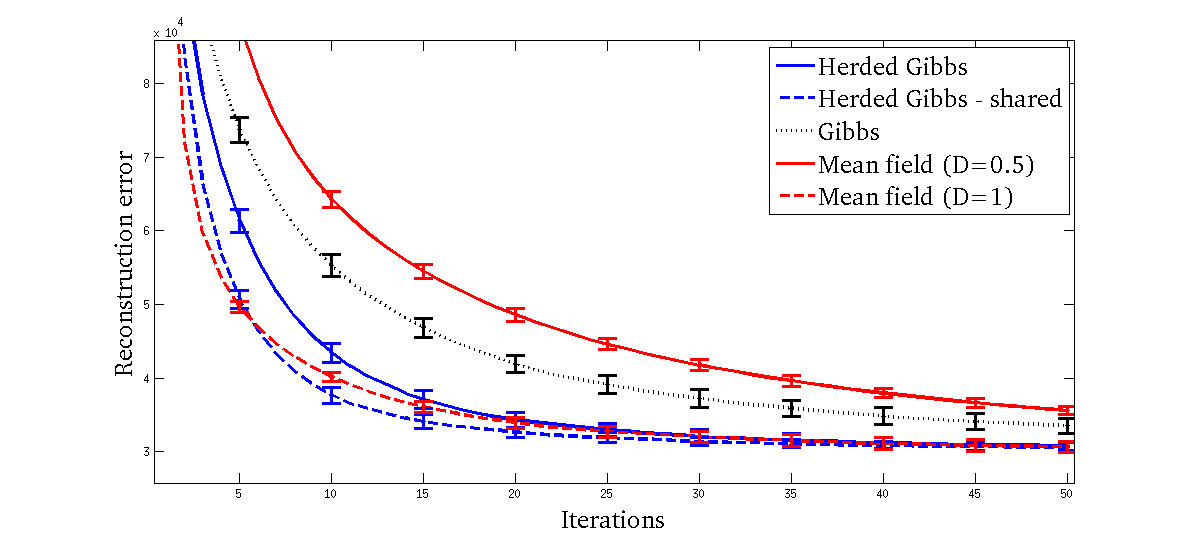}
\end{center}
\caption{Reconstruction errors for the image denoising task. The results are averaged across 10 corrupted images with Gaussian noise ${\cal N}(0,16)$. The error bars correspond to one standard deviation. Mean field requires the specification of the damping factor D. }
\label{fig:convergence}
\end{figure}

A comparison for different values $\sigma$ is presented in Table~\ref{table:mrf}. As expected mean field does well in the low-noise scenario, but the performance of the shared version of herded Gibbs as the noise increases is significantly better.

\begin{table}[t!]
\caption{Errors of image denoising example after 30 iterations (all measurements have been scaled by $\e{-3}$). We use an Ising prior with $J_{ij}=1$ and four Gaussian noise models with different $\sigma$'s. For each $\sigma$, we generated 10 corrupted images by adding Gaussian noise. The final results shown here are averages and standard deviations (in parentheses) across the 10 corrupted images. D denotes the damping factor in mean field.}
\label{table:mrf}
\begin{center}
{\small
\begin{tabular}{l|lllll}
\backslashbox{Method}{$\sigma$}	&2 &4 &6 &8
\\ \hline \\
 Herded Gibbs    &$21.58 (0.26)$	&$32.07 (0.98)$	&$47.52 (1.64)$	&$67.93 (2.78)$\\ 
 Herded Gibbs - shared    &$22.24 (0.29)$	&${\bf31.40} (0.59)$	&${\bf42.62} (1.98)$	&${\bf58.49} (2.86)$\\ 
 Gibbs   &$21.63 (0.28)$	&$37.20 (1.23)$	&$63.78 (2.41)$	&$90.27 (3.48)$\\ 
 Mean field (D=0.5)   &${\bf 15.52} (0.30)$	&$41.76 (0.71)$	&$76.24 (1.65)$	&$104.08 (1.93)$\\ 
 Mean field (D=1)   &$17.67 (0.40)$	&$32.04 (0.76)$	&$51.19 (1.44)$	&$74.74 (2.21)$\\ 
\end{tabular}
}
\end{center}
\end{table}

\subsection{CRF for Named Entity Recognition}
\label{sec:crf}

\begin{figure}[t]
\centering
\resizebox{0.95\textwidth}{!}{
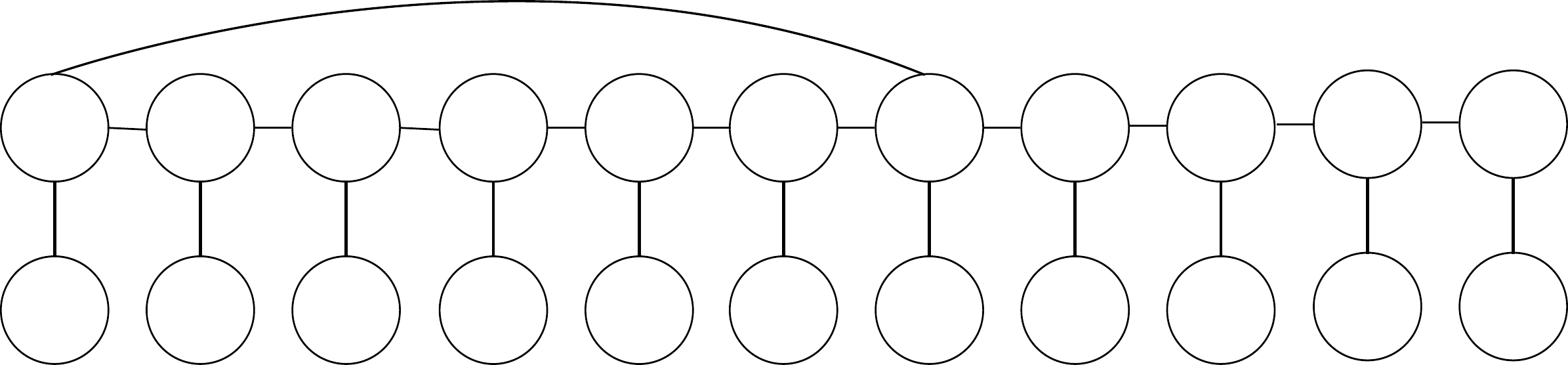
}
  \captionof{figure}{Typical skip-chain CRF model for named entity recognition.}
  \label{fig:crf}
  \end{figure}
  
Named Entity Recognition (NER) involves the identification of entities, such as people and locations, within a text sample. A conditional random fied (CRF) for NER models the relationship between entity labels and sentences with a conditional probability distribution:
$P(Y | X, \theta)$,
where $X$ is a sentence, $Y$ is a labeling, and $\theta$ is a vector of coupling parameters. The parameters, $\theta$, are feature weights and model relationships between variables $Y_{i}$ and $X_{j}$ or $Y_{i}$ and $Y_{j}$. A chain CRF only employs relationships between adjacent variables, whereas a skip-chain CRF can employ relationships between variables where subscripts $i$ and $j$ differ dramatically. Skip-chain CRFs are important in language tasks, such as NER and semantic role labeling, because they allow us to model long dependencies in a stream of words, see Figure~\ref{fig:crf}.

Once the parameters have been learned, the CRF can be used for inference; a labeling for some sentence $X$ is found by maximizing the above probability. 
Inference for CRF models in the NER domain is typically carried out with the Viterbi algorithm. However, if we want to accommodate long term dependencies, thus resulting in the so called skip-chain CRFs, Viterbi becomes prohibitively expensive. To surmount this problem, the Stanford named entity recognizer \cite{Finkel:2005} makes use of annealed Gibbs sampling.   

To demonstrate herded Gibbs on a practical application of great interest in text mining, we modify the standard inference procedure of the Stanford named entity recognizer by replacing the annealed Gibbs sampler with the herded Gibbs sampler. The herded Gibbs sampler in not annealed. To find the maximum a posteriori sequence $Y$, we simply choose the sample with highest joint discrete probability.
In order to be able to compare against Viterbi, we have purposely chosen to use single-chain CRFs. We remind the reader, however, that the herded Gibbs algorithm could be used in cases where Viterbi inference is not possible.

We used the pre-trained 3-class CRF model in the Stanford NER package \cite{Finkel:2005}. 
This model is a linear chain CRF with pre-defined features and pre-trained feature weights, $\theta$. 
For the test set, we used the corpus for the NIST 1999 IE-ER Evaluation. Performance is measured in per-entity 
$F_1$ $\left( F_1 = 2 \cdot \frac{\text{precision} \cdot \text{recall}}{\text{precision} + \text{recall} } \right)$. 
For all the methods, except Viterbi, we show $F_1$ scores after 100, 400 and 800 iterations
in Table~\ref{table:crf}. For Gibbs, the results shown are the averages and standard deviations over 5 random runs. We used a linear annealing schedule for Gibbs.
As the results illustrate, herded Gibbs attains the same accuracy as Viterbi and it is faster than annealed Gibbs. Unlike Viterbi, herded Gibbs can be easily applied to skip-chain CRFs.  After only 400 iterations (90.5 seconds), herded Gibbs already achieves an $F_{1}$ score of 84.75, while Gibbs, even after 800 iterations (115.9 seconds) only achieves an $F_{1}$ score of 84.61. The experiment thus clearly demonstrates that (i) herded Gibbs does no worse than the optimal solution, Viterbi, and (ii) herded Gibbs yields more accurate results for the same amount of computation.
Figure~\ref{fig:NERExample} provides a representative NER example of the performance of Gibbs, herded Gibbs and Viterbi (all methods produced the same annotation for this short example).

\begin{table}[H]
\caption{Gibbs, herded Gibbs and Viterbi for the NER task. 
The average computational time each approach took to do inference for the entire test set is listed (in square brackets). After only 400 iterations (90.48 seconds), herded Gibbs already achieves an $F_{1}$ score of 84.75, while Gibbs, even after 800 iterations (115.92 seconds) only achieves an $F_{1}$ score of 84.61. For the same computation, herded Gibbs is more accurate than Gibbs. }
\label{table:crf}
\begin{center}
{\small
\makebox[\textwidth][c]{
\begin{tabular}{l|lll}
\backslashbox{Method}{Iterations}	&100 &400 &800
\\ \hline \\
Annealed Gibbs		&$84.36 (0.16)$	[$55.73$s]	&$84.51 (0.10)$	[$83.49$s]	&$84.61 (0.05)$	[$115.92$s]\\
Herded Gibbs		&$84.70$	[$59.08$s]	&$84.75$	[$90.48$s]	&$84.81$ 	[$132.00$s]\\
Viterbi		&&&$84.81$[$46.74$s]  \\
\end{tabular}
}}
\end{center}
\end{table}

\begin{figure}[t]
\begin{quote}

{\footnotesize \textcolor{blue}{
"Pumpkin" (\textit{\textcolor{red}{Tim Roth}}) and "Honey Bunny" (\textit{\textcolor{red}{Amanda Plummer}}) are having breakfast in a diner. They decide to rob it after realizing they could make money off the customers as well as the business, as they did during their previous heist. Moments after they initiate the hold-up, the scene breaks off and the title credits roll.
As \textit{\textcolor{red}{Jules Winnfield}} (\textit{\textcolor{red}{Samuel L. Jackson}}) drives, \textit{\textcolor{red}{Vincent Vega}} (\textit{\textcolor{red}{John Travolta}}) talks about his experiences in \textbf{\textcolor{green}{Europe}}, from where he has just returned: the hash bars in \textbf{\textcolor{green}{Amsterdam}}, the French \underline{\textcolor{orange}{McDonald's}} and its "Royale with Cheese".
}}
\end{quote}
\caption{Results for the application of the NER CRF to a random Wikipedia sample~\cite{Wiki:2012:Online}. Entities are automatically classified as \textit{\textcolor{red}{Person}}, \textbf{\textcolor{green}{Location}} and \underline{\textcolor{orange}{Organization}}.  }
\label{fig:NERExample}
\end{figure}

\section{Conclusions and Future Work}
\label{sec:conclusions}

In this paper, we introduced herded Gibbs, a deterministic variant of the popular Gibbs sampling algorithm. While Gibbs relies on drawing samples from the {full-conditionals} at random, herded Gibbs generates the samples by matching the full-conditionals. Importantly, the herded Gibbs algorithm is very close to the Gibbs algorithm and hence retains its simplicity of implementation. 

The synthetic, denoising and named entity recognition experiments provided evidence that herded Gibbs outperforms Gibbs sampling. However, as discussed, herded Gibbs requires storage of the conditional distributions for all instantiations of the neighbors in the worst case. This storage requirement indicates that it is more suitable for sparse probabilistic graphical models, such as the CRFs used in information extraction. 
At the other extreme, the paper advanced the theory of deterministic sampling by showing that herded Gibbs converges with rate $O(1/T)$ for models with independent variables and fully-connected models. Thus, there is gap between theory and practice that needs to be narrowed. We do not anticipate that this will be an easy task, but it is certainly a key direction for future work. 

We should mention that it is also possible to design parallel versions of herded Gibbs in a Jacobi fashion.
We have indeed studied this and found that these are  less efficient than the Gauss-Seidel version of herded Gibbs discussed in this paper. However, if many cores are available, we strongly recommend the Jacobi (asynchronous) implementation as it will likely outperform the Gauss-Seidel (synchronous) implementation.

The design of efficient herding algorithms for densely connected probabilistic graphical models remains an important area for future research.  Such algorithms, in conjunction with Rao Blackwellization, would enable us to attack many statistical inference tasks, including Bayesian variable selection and Dirichlet processes. 

There are also interesting connections with other algorithms to explore. If, for a fully connected graphical model, we build a new graph where every state is a node and directed connections exist between nodes that can be reached with a single herded Gibbs update, then herded Gibbs becomes equivalent to the Rotor-Router model of Alex Holroyd and Jim Propp\footnote{We thank Art Owen for pointing out this connection.} \cite{holroyd2010rotor}. This deterministic analogue of a random walk has provably superior concentration rates for quantities such as normalized hitting frequencies, hitting times and occupation frequencies. In line with our own convergence results, it is shown that discrepancies in these quantities decrease as $O(1/T)$ instead of the usual $O(1/\sqrt{T})$. We expect that many of the results from this literature apply to herded Gibbs as well. The connection with the work of Art Owen and colleagues, see for example \cite{Chen:2011}, also needs to be explored further. Their work uses \emph{completely uniformly distributed (CUD) sequences} to drive Markov chain Monte Carlo schemes. It is not clear, following discussions with Art Owen, that CUD sequences can be constructed in a greedy way as in herding.

%
%
%


\bibliography{herd,Refs_yutian}
\bibliographystyle{plain}

\appendix

\section{Proof of Theorem \ref{thm:convergence_empty_graph}}
We first show that the weight dynamics of a one-variable herding algorithm are restricted to an invariant interval of length $1$.

\begin{lemma} \label{lem:single_node} If $w$ is the weight of the herding dynamics of a single binary variable $X$ with probability $P(X=1)=\pi$, and $w^{(s)} \in (\pi-1, \pi]$ at some step $s\geq 0$, then $w^{(t)}\in(\pi-1,\pi],\forall t\geq s$. Moreover, for $T\in\mathbb{N}$, we have:
\begin{align}
\sum_{t=s+1}^{s+T} \mathbb{I}[X^{(t)}=1] &\in [T\pi - 1, T\pi + 1] \label{eqn:lemma_single_node_1} \\
\sum_{t=s+1}^{s+T} \mathbb{I}[X^{(t)}=0] &\in [T(1-\pi) - 1, T(1-\pi) + 1] .\label{eqn:lemma_single_node_2}
\end{align}
\end{lemma}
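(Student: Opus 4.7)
The plan is a two-part argument: first I would establish the invariance of the interval $(\pi-1, \pi]$ under the herding map, and then I would derive the counting bounds by telescoping the weight recursion.

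For the invariance claim, I would proceed by induction on $t \geq s$. The base case is the hypothesis. For the inductive step, assume $w^{(t-1)} \in (\pi-1, \pi]$ and split on the sign of $w^{(t-1)}$. If $w^{(t-1)} > 0$, then $X^{(t)} = 1$ and $w^{(t)} = w^{(t-1)} + \pi - 1$; the upper bound gives $w^{(t)} \leq 2\pi - 1 \leq \pi$ (using $\pi \leq 1$), and the lower bound gives $w^{(t)} > \pi - 1$ (using $w^{(t-1)} > 0$). If $w^{(t-1)} \leq 0$, then $X^{(t)} = 0$ and $w^{(t)} = w^{(t-1)} + \pi$; the upper bound $w^{(t)} \leq \pi$ is immediate, and the lower bound $w^{(t)} > 2\pi - 1 \geq \pi - 1$ (using $\pi \geq 0$) holds. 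So the interval is invariant.

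For the counting bounds, the key observation is that the single-variable update can be rewritten as $w^{(t)} - w^{(t-1)} = \pi - X^{(t)}$. Telescoping from $t = s+1$ to $t = s+T$ yields
\begin{equation}
\sum_{t=s+1}^{s+T} X^{(t)} = T\pi + w^{(s)} - w^{(s+T)}.
\end{equation}
Since both $w^{(s)}$ and $w^{(s+T)}$ lie in $(\pi-1,\pi]$ by the invariance just proved, their difference lies in $[-1, 1)$, which immediately gives \eqref{eqn:lemma_single_node_1}. Equation \eqref{eqn:lemma_single_node_2} follows from \eqref{eqn:lemma_single_node_1} via $\mathbb{I}[X^{(t)}=0] = 1 - \mathbb{I}[X^{(t)}=1]$ and summing.

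There is no real obstacle here; the only mild subtlety is keeping the strict-versus-nonstrict inequalities consistent across the two cases of the invariance induction (the interval is half-open on the left and closed on the right, so one must verify that the boundary $w = \pi - 1$ is never reached from above and that $w = \pi$ can be attained only via the $w^{(t-1)} \leq 0$ branch). Once that bookkeeping is done, the telescoping step is essentially algebraic.
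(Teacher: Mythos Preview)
Your proposal is correct and is essentially identical to the paper's own proof: the same induction with the same two-case split for invariance of $(\pi-1,\pi]$, followed by telescoping the recursion to bound $w^{(s+T)}-w^{(s)}$, and the same use of $\mathbb{I}[X^{(t)}=0]=1-\mathbb{I}[X^{(t)}=1]$ for the second bound.
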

\begin{proof}
We first show that $w\in (\pi - 1, \pi], \forall t \geq s$. This is easy to observe by induction as $w^{(s)} \in (\pi-1, \pi]$ and if $w^{(t)} \in (\pi-1, \pi]$ for some $t\geq s$, then, following Equation \ref{eq:singleNodeDynamicsA}, we have:
\begin{equation}
w^{(t+1)} = \left\{\begin{array}{ll}
w^{(t)} + \pi - 1 \in (\pi - 1, 2\pi - 1] \subseteq (\pi-1, \pi] & \mbox {if } w^{(t)} > 0 \\
w^{(t)} + \pi \in (2\pi-1, \pi] \subseteq (\pi-1, \pi] & \mbox {otherwise}.
\end{array}\right.
\end{equation}
Summing up both sides of Equation \ref{eq:singleNodeDynamicsA} over $t$ immediately gives us the result of Equation \ref{eqn:lemma_single_node_1} since:
\begin{equation}
T \pi - \sum_{t=s+1}^{s+T} \mathbb{I}[X^{(t)}=1] = w^{(s+T)}-w^{(s)} \in [-1, 1].
\end{equation}
In addition, Equation \ref{eqn:lemma_single_node_2} follows by observing that $\mathbb{I}[X^{(t)}=0] = 1 - \mathbb{I}[X^{(t)}=1]$.
\end{proof}

When $w$ is outside the invariant interval, it is easy to observe that $w$ will move into it monotonically at a linear speed in a transient period. So we will always consider an initialization of $w\in(\pi-1,\pi]$ from now on.

Equivalently, we can take a one-to-one mapping $w \leftarrow w \mod 1$ (we define $1 \mod 1 = 1$) and think of $w$ as updated by a constant translation vector in a circular unit interval $(0,1]$ as shown in Figure \ref{fig:univariate_2}. That is, 
\bea
w^{(t)} =  (w^{(t-1)} + \pi) \mod 1, \quad x^{(t)} = \left\{\begin{array}{rl}
1 & \mbox {if $w^{(t-1)} < \pi$} \\
0 & \mbox {otherwise}
\end{array}\right.
\label{eq:singleNodeDynamicsB}
\eea

\begin{figure}[h!]
 \begin{minipage}[t]{.5\textwidth}
 \centering
 \includegraphics[width=.9\textwidth] {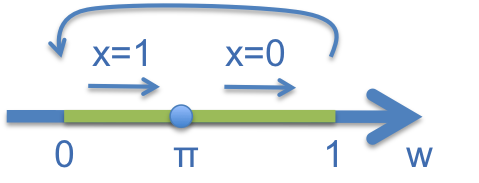}
 \caption{Equivalent weight dynamics for\newline a single variable.}
 \label{fig:univariate_2}
 \end{minipage}
 \begin{minipage}[t]{.5\textwidth}
 \centering
 \includegraphics[width=.9\textwidth] {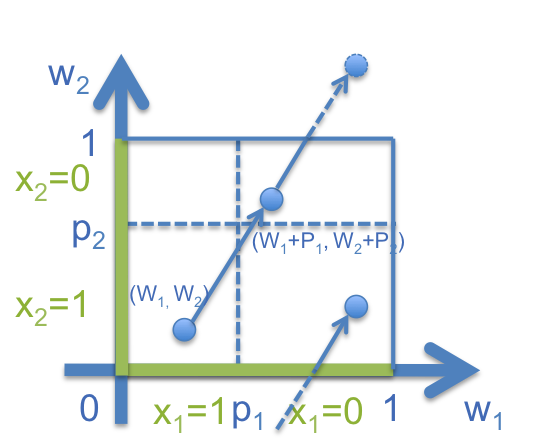}
 \caption{Dynamics of herding with two independent variables.}
 \label{fig:bivariable}
 \end{minipage}
\end{figure}


We are now ready to give the proof of Theorem \ref{thm:convergence_empty_graph}.
\begin{proof}[Proof of Theorem \ref{thm:convergence_empty_graph}]
For an empty graph of $N$ independent vertices, the dynamics of the weight vector $\mathbf{w}$ are equivalent to a constant translation mapping in an $N$-dimensional circular unit space $(0,1]$, as shown in Figure \ref{fig:bivariable}:
\bea
\mathbf{w}^{(t)} &= & (\mathbf{w}^{(t-1)} + \boldsymbol{\pi}) \mod 1 \nonumber \\
&=& (\mathbf{w}^{(0)} + t \boldsymbol{\pi}) \mod 1, \quad x_i^{(t)} = \left\{\begin{array}{rl}
1 & \mbox {if $w_i^{(t-1)} < \pi_i$} \\
0 & \mbox {otherwise}
\end{array}\right., \forall 1\leq i\leq N
\label{eq:singleNodeDynamicsB}
\eea
The Kronecker-Weyl theorem \cite{weyl} states that the sequence $\tilde{\mathbf{w}}^{(t)}=t\mathbf{\pi} \mod 1, t\in\mathbb{Z}^+$ is equidistributed (or uniformly distributed) on $(0, 1]$ if and only if $(1, \pi_1, \dots, \pi_N)$ is rationally independent. Since we can define a one-to-one volume preserving transformation between $\tilde{\mathbf{w}}^{(t)}$ and $\mathbf{w}^{(t)}$ as $(\tilde{\mathbf{w}}^{(t)} + \mathbf{w}^{(0)}) \mod 1 = \mathbf{w}^{(t)}$, the sequence of weights $\{\mathbf{w}^{(t)}\}$ is also uniformly distributed in $(0, 1]^N$.

Define the mapping from a state value $x_i$ to an interval of $w_i$ as 
\begin{equation}
A_i(x)=\left\{\begin{array}{rl}
(0, \pi_i] & \mbox {if $x = 1$} \\
(\pi_i, 1] & \mbox {if $x = 0$}
\end{array}\right.
\end{equation}
and let $|A_i|$ be its measure. We obtain the limiting distribution of the joint state as
\bea
\lim_{T\rightarrow\infty}P_T^{(\tau)}(\mathbf{X}=\mathbf{x})&=&\lim_{T\rightarrow\infty}\frac{1}{T}\sum_{t=1}^T \mathbb{I}\left[\mathbf{w}^{(t-1)}\in \prod_{i=1}^N A_i(x_i)\right] \nonumber \\
&= & \prod_{i=1}^N |A_i(x_i)| \nonumber \\
&=& \prod_{i=1}^N \pi(X_i=x_i)\nonumber  \\
&=& \pi(\mathbf{X}=\mathbf{x})
\eea
\end{proof}

\section{Proof of Theorem \ref{thm:convergenceHerdedGibbs}}

In this appendix, we give an upper bound for the convergence rate of the sampling distribution in fully connected graphs. As herded Gibbs sampling is deterministic, the distribution of a variable's state at every iteration  degenerates to a single state. As such, we study here the empirical distribution of a collection of samples.

The structure of the proof is as follows (with notation defined in the next subsection): We study the distribution distance between the invariant distribution $\pi$ and the empirical distribution of $T$ samples collected starting from sweep $\tau$, $P^{(\tau)}_T$. We show that the distance decreases as $\tau \Rightarrow \tau+1$ with the help of an auxiliary regular Gibbs sampling Markov chain initialized at $\pi^{(0)}=P^{(\tau)}_T$, as shown in Figure \ref{fig:distr_dist}. On the one hand, the distance between the regular Gibbs chain after one iteration, $\pi^{(1)}$, and $\pi$ decreases according to the geometric convergence property of MCMC algorithms on compact state spaces. On the other hand, we show that in one step the distance between $P^{(\tau+1)}_T$ and $\pi^{(1)}$ increases by at most $O(1/T)$. Since the $O(1/T)$ distance term dominates the exponentially small distance term,  the distance between $P^{(\tau+1)}_T$ and $\pi$ is bounded by $O(1/T)$. Moreover, after a short burn-in period, $L=O(\log(T))$, the empirical distribution $P^{(\tau+L)}_T$ will have an approximation error in the order of $O(1/T)$.

\begin{figure}[htp!]
\centering
\includegraphics[width=.6\linewidth]{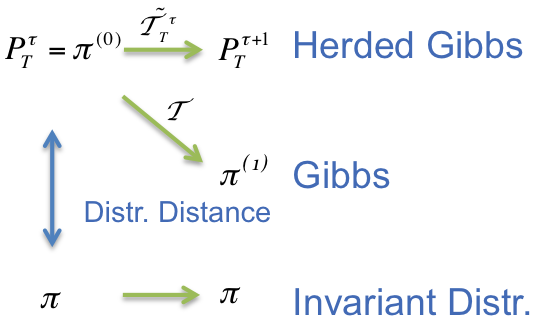} 
\caption{Transition kernels and relevant distances for the proof of Theorem~\ref{thm:convergenceHerdedGibbs}.}
\label{fig:distr_dist}
\end{figure}

\subsection{Notation}
Assume without loss of generality that in the systematic scanning policy, the variables are sampled in the order $1,2,\cdots,N$.

\subsubsection{State Distribution}

\begin{itemize}
\item Denote by $\mathcal{X}_+$ the support of the distribution $\pi$, that is, the set of states with positive probability.
\item We use $\tau$ to denote the time in terms of sweeps over all of the $N$ variables, and $t$ to denote the time in terms of steps where one step constitutes the updating of one variable. For example, at the end of $\tau$ sweeps, we have $t=\tau N$.
\item Recall the sample/empirical distribution, $P^{(\tau)}_T$,  presented in Definition~\ref{def:hGDist}.  Figure~\ref{fig:distr}  provides a visual interpretation of the definition.   
\begin{figure}[htp!]
\centering
\includegraphics[width=.8\linewidth]{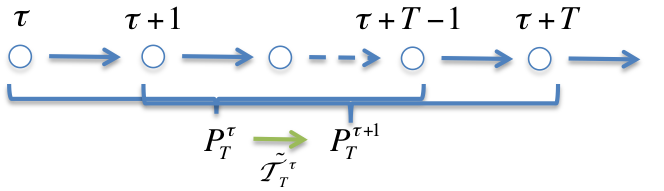} 
\caption{Distribution over time at the end of every sweep.}
\label{fig:distr}
\end{figure}
\item Denote the sample/empirical distribution at the $i^{th}$ step within a sweep as $P^{(\tau)}_{T,i}, \tau\geq 0, T>0, 0\leq i \leq N$, as shown in Figure \ref{fig:distr_in_sweep}:
$$
P^{(\tau)}_{T,i}(\mathbf{X}=\mathbf{x})=\frac{1}{T}\sum_{k=\tau}^{\tau+T-1}\mathbb{I}(\mathbf{X}^{(kN+i)}=\mathbf{x}).
$$
This is the distribution of $T$ samples collected at the $i^{th}$ step of every sweep, starting from the $\tau^{th}$ sweep. Clearly, $P_{T}^{(\tau)} = P_{T,0}^{(\tau)} = P_{T,N}^{(\tau-1)}$.
\begin{figure}[htp!]
\centering
\includegraphics[width=.8\linewidth]{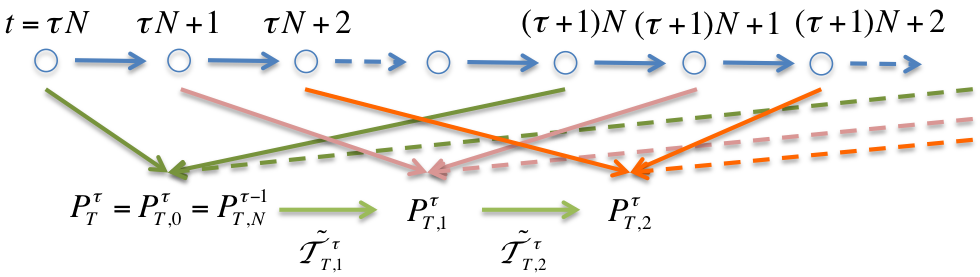} 
\caption{Distribution over time within a sweep.}
\label{fig:distr_in_sweep}
\end{figure}

\item Denote the distribution of a regular Gibbs sampling Markov chain after $L$ sweeps of updates over the $N$ variables with $\pi^{(L)}, L\geq 0$. 

For a given time $\tau$, we construct a Gibbs Markov chain with initial distribution $\pi^{0}=P_{T}^{(\tau)}$ and the same scanning order of herded Gibbs, as shown in Figure \ref{fig:distr_dist}.
\end{itemize}

\subsubsection{Transition Kernel}
\begin{itemize}
\item Denote the transition kernel of regular Gibbs for the step of updating a variable $X_i$ with $\mathcal{T}_i$, and for a whole sweep with $\mathcal{T}$. 

By definition, $\pi^{0}\mathcal{T} = \pi^{1}$. The transition kernel for a single step can be represented as a $2^N\times 2^N$ matrix:
\begin{equation}
\mathcal{T}_i(\mathbf{x}, \mathbf{y}) = \left\{\begin{array}{rl}
0 & \mbox {if } \mathbf{x}_{-i} \neq \mathbf{y}_{-i} \\
\pi(X_i = y_i | \mathbf{x}_{-i}) & \mbox {otherwise}
\end{array}\right.,1\leq i\leq N, \mathbf{x}, \mathbf{y} \in \{0, 1\}^N
\end{equation}
where $\mathbf{x}$ is the current state vector of $N$ variables, $\mathbf{y}$ is the state of the next step, and $\vx_{-i}$ denotes all the components of $\vx$ excluding the $i^{th}$ component.  If $\pi(\mathbf{x}_{-i})=0$, the conditional probability is undefined and we set it with an arbitrary distribution. Consequently, $\mathcal{T}$ can also be represented as:
$$
\mathcal{T} = \mathcal{T}_1 \mathcal{T}_2 \cdots \mathcal{T}_N.
$$

\item Denote the Dobrushin ergodic coefficient \cite{bremaud1999markov} of the regular Gibbs kernel with $\eta \in [0, 1]$. When $\eta < 1$, the regular Gibbs sampler has a geometric rate of convergence of
\begin{equation}
d_v(\pi^{(1)} - \pi) = d_v(\mathcal{T}\pi^{(0)} - \pi) \leq \eta d_v(\pi^{(0)} - \pi), \forall \pi^{(0)} \label{eqn:dvgibbs_inv}.
\end{equation}
A common sufficient condition for $\eta < 1$ is that $\pi(\mathbf{X})$ is strictly positive.

\item Consider the sequence of sample distributions $P^{(\tau)}_T, \tau=0,1,\cdots$ in Figures \ref{fig:distr} and \ref{fig:distr_in_sweep}. We define the transition kernel of herded Gibbs for the step of updating variable $X_i$ with $\tilde{\mathcal{T}}^{(\tau)}_{T, i}$, and for a whole sweep with $\tilde{\mathcal{T}}^{(\tau)}_{T}$. 

Unlike regular Gibbs, the transition kernel is not homogeneous. It depends on both the time $\tau$ and the sample size $T$. Nevertheless, we can still represent the single step transition kernel as a matrix:
\begin{equation}
\tilde{\mathcal{T}}^{(\tau)}_{T, i}(\mathbf{x}, \mathbf{y}) = \left\{\begin{array}{rl}
0 & \mbox {if } \mathbf{x}_{-i} \neq \mathbf{y}_{-i} \\
P^{(\tau)}_{T, i}(X_i = y_i | \mathbf{x}_{-i}) & \mbox {if } \mathbf{x}_{-i} = \mathbf{y}_{-i}
\end{array}\right.,1\leq i\leq N,\mathbf{x}, \mathbf{y} \in \{0, 1\}^N, 
\end{equation}
where $P^{(\tau)}_{T, i}(X_i = y_i | \mathbf{x}_{-i})$ is defined as:
\begin{align}
& P^{(\tau)}_{T, i}(X_i = y_i | \mathbf{x}_{-i}) = \frac{N_{\textrm{num}}}{N_{\textrm{den}}} \nonumber\\ 
& N_{\textrm{num}} = T P^{(\tau)}_{T, i}(\mathbf{X}_{-i}=\mathbf{x}_{-i}, X_i = y_i) = \sum_{k=\tau}^{\tau+T-1} \mathbb{I}(\mathbf{X}_{-i}^{(kN + i)}=\mathbf{x}_{-i}, X_i^{(kN + i)}=y_i) \nonumber\\
& N_{\textrm{den}} = T P^{(\tau)}_{T, i-1}(\mathbf{X}_{-i}=\mathbf{x}_{-i}) = \sum_{k=\tau}^{\tau+T-1} \mathbb{I}(\mathbf{X}_{-i}^{(kN + i -1)}=\mathbf{x}_{-i}), 
\end{align}
where $N_{\textrm{num}}$ is the number of occurrences of a joint state, and $N_{\textrm{den}}$ is the number of occurrences of a conditioning state in the previous step. When $\pi(\mathbf{x}_{-i})=0$, we know that $N_{\textrm{den}} = 0$ with a proper initialization of herded Gibbs, and we simply set $\tilde{\mathcal{T}}^{(\tau)}_{T, i}=\mathcal{T}_{i}$ for these entries. It is not hard to verify the following identity by expanding every term with its definition $$
P_{T,i}^{(\tau)} = P_{T,i-1}^{(\tau)} \tilde{\mathcal{T}}^{(\tau)}_{T, i}
$$ and consequently, 
$$
P_{T}^{(\tau+1)} = P_{T}^{(\tau)} \tilde{\mathcal{T}}^{(\tau)}_{T}
$$
with 
$$
\tilde{\mathcal{T}}^{(\tau)}_{T} = \tilde{\mathcal{T}}^{(\tau)}_{T, 1} \tilde{\mathcal{T}}^{(\tau)}_{T, 2} \cdots \tilde{\mathcal{T}}^{(\tau)}_{T, N}.
$$
\end{itemize}

\subsection{Linear Visiting Rate}
We prove in this section that every joint state in the support of the target distribution is visited, at least, at a linear rate. This result will be used to measure the distance between the Gibbs and herded Gibbs transition kernels.


\begin{proposition} \label{prop:reachability}
If a graph is fully connected, herded Gibbs sampling scans variables in a fixed order, and the corresponding Gibbs sampling Markov chain is irreducible, then for any state $\mathbf{x}\in\mathcal{X}_+$ and any index $i\in[1,N]$, the state is visited at least at a linear rate. Specifically,
\begin{align}
& \exists l > 0, B > 0, s.t., \forall i \in [1, N], \mathbf{x}\in\mathcal{X}_+, T \in \mathbb{N}, s \in \mathbb{N}\nonumber \\
& \sum_{k=s}^{s+T-1} \mathbb{I}\left[\mathbf{X}^{(t=Nk+i)}=\mathbf{x}\right] \geq l T - B
\end{align}
\end{proposition}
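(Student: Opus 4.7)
The plan is to exploit Lemma \ref{lem:single_node} at the level of each individual weight and aggregate the resulting per-weight bounds into a linear recursion on state-visit counts. Fix a starting sweep $s$ and a window of $T$ consecutive sweeps, and let $\mathbf{n}_i \in \mathbb{R}^{|\calX_+|}$ denote the vector of counts $\mathbf{n}_i[\mathbf{x}] = \sum_{k=s}^{s+T-1} \ind{\mathbf{X}^{(kN+i)} = \mathbf{x}}$. For each position $i$ and each neighbor configuration $\mathbf{z}$, Lemma \ref{lem:single_node} applied to the subsequence of updates that use weight $w_{i,\mathbf{z}}$ says that out of the $M_{i,\mathbf{z}}$ updates of that weight during the window, $X_i$ is set to $y$ at $M_{i,\mathbf{z}}\pi(X_i = y \mid \mathbf{z}) \pm 1$ of them. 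Summing these row-level bounds gives the matrix recursion $\mathbf{n}_i = \mathbf{n}_{i-1}\mathcal{T}_i + \boldsymbol{\delta}_i$ with $\|\boldsymbol{\delta}_i\|_{\infty}\le 1$, uniformly in $T$ and $s$.

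Composing across a full sweep yields $\mathbf{n}_N = \mathbf{n}_0 \mathcal{T} + \boldsymbol{\epsilon}$ with a constant $\ell_1$ bound on $\boldsymbol{\epsilon}$; a direct count at the window boundary gives $\|\mathbf{n}_N - \mathbf{n}_0\|_1 \le 2$. Combining these two facts, $\mathbf{n}_N(\mathbf{I} - \mathcal{T}) = O(1)$ in $\ell_1$, with an absolute constant independent of $s$. Normalizing, the empirical probability vector $\mathbf{p}_T \defeq \mathbf{n}_N / T$ is an approximate fixed point of $\mathcal{T}$ with defect $O(1/T)$. Because the full conditionals are strictly positive on $\calX_+$ for a fully connected graph, the systematic-scan kernel $\mathcal{T}$ has positive self-loop probability at every state of $\calX_+$ (sample each variable at its current value), making it aperiodic in addition to the assumed irreducibility; consequently $\pi$ is the unique stationary distribution and $(\mathbf{I} - \mathcal{T})$ is invertible with a bounded inverse on the zero-sum hyperplane of signed measures on $\calX_+$. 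Applying this inverse gives $\mathbf{n}_N - T\pi = O(1)$ entrywise, hence $N_N(\mathbf{x}, T, s) \ge T\pi(\mathbf{x}) - C \ge lT - B$ for every $\mathbf{x}\in\calX_+$ with any $l < \min_{\mathbf{x}\in\calX_+}\pi(\mathbf{x})$ and $B$ chosen to absorb the residual constant and the trivially-covered small-$T$ regime. To move from $i=N$ to an arbitrary $i\in[1,N]$, iterate the single-step recursion $N_i(\mathbf{x}, T, s) \ge \pi(X_i = x_i \mid \mathbf{x}_{-i})\, N_{i-1}(\mathbf{x}, T, s) - 1$: each such step multiplies $l$ by the positive quantity $\pi(X_i = x_i \mid \mathbf{x}_{-i})$ (strictly positive for $\mathbf{x}\in\calX_+$) and adds a constant to $B$, so after at most $N$ iterations one obtains uniform positive constants valid across all $i$ and $\mathbf{x}\in\calX_+$.

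The step I expect to be hardest is the invertibility invocation for $(\mathbf{I} - \mathcal{T})$. Mere irreducibility does not automatically guarantee that $(\mathbf{I} - \mathcal{T})$ is invertible on the zero-sum hyperplane, and one then needs the bound on its inverse to be absolute, independent of $s$, in order to extract the constants $l, B$ that work uniformly over starting times. The remedy is to upgrade to aperiodicity via the self-loop argument above and then use finiteness of $\calX_+$ to extract a spectral gap. A softer alternative is to argue by compactness of the simplex that every subsequential limit of $\mathbf{p}_T$ is a stationary distribution of $\mathcal{T}$ and therefore equals $\pi$, giving $p_T[\mathbf{x}] \to \pi(\mathbf{x}) > 0$ and hence an $lT$-type bound for large $T$; but converting this qualitative convergence into the uniform additive $B$ valid for every starting sweep requires the same spectral information in a different guise.
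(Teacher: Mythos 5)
Your proof is correct, but it takes a genuinely different route from the paper's. The paper proves the proposition combinatorially: irreducibility supplies, for each target state $\mathbf{y}$ and position $i$, a path of length $t^*$ with positive transition probability from every state of $\mathcal{X}_+$; Lemma~\ref{lem:iterative_step} (itself a consequence of Lemma~\ref{lem:single_node} applied to one weight at a time) is then applied recursively along that path, each step multiplying the visit count by $\pi_{\textrm{min}}$ and losing only an additive constant, which yields the explicit constants $l=\pi_{\textrm{min}}^{t^*}$ and $B$ of Equation~\ref{eqn:lB}. You instead keep the per-weight $\pm 1$ herding error in \emph{unnormalized count} form, obtain the approximate fixed-point relation $\mathbf{n}_N(\mathbf{I}-\mathcal{T})=O(1)$ in $\ell_1$, and invert $(\mathbf{I}-\mathcal{T})$ on the zero-sum hyperplane. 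This is the key move that avoids the circularity the paper has to manage (its Proposition~\ref{prop:err_onestep} needs the reachability bound precisely to divide by $N_{\textrm{den}}$, whereas your additive count errors never require a lower bound on how often a weight is used), and it delivers a strictly stronger conclusion: $\|\mathbf{n}_N-T\pi\|_1=O(1)$ uniformly in $s$, i.e.\ essentially Theorem~\ref{thm:convergenceHerdedGibbs} without the burn-in period, with the sharper rate $l$ arbitrarily close to $\min_{\mathbf{x}\in\mathcal{X}_+}\pi(\mathbf{x})$ rather than $\pi_{\textrm{min}}^{t^*}$. Two small remarks. First, the step you flag as hardest is actually free: for a finite irreducible chain the Perron eigenvalue $1$ is algebraically simple, the zero-sum hyperplane is a $\mathcal{T}$-invariant complement of $\mathrm{span}(\pi)$, so $(\mathbf{I}-\mathcal{T})$ is automatically invertible there with a norm bound depending only on $\mathcal{T}$ (and if one assumes $\eta<1$ as in Theorem~\ref{thm:convergenceHerdedGibbs}, the bound $\|((\mathbf{I}-\mathcal{T})|_H)^{-1}\|_{1\to 1}\leq 1/(1-\eta)$ is explicit); the aperiodicity upgrade via self-loops is valid but unnecessary. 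Second, your constants $l,B$ differ from those in Equation~\ref{eqn:lB}, which matters only because the paper's Theorem~\ref{thm:convergenceHerdedGibbs} quotes those specific constants downstream; for the proposition as stated (mere existence of $l>0$, $B>0$) your argument is complete, including the propagation from the sweep boundary to intermediate positions $i$ via $\mathbf{n}_i[\mathbf{x}]\geq\pi(x_i|\mathbf{x}_{-i})\,\mathbf{n}_{i-1}[\mathbf{x}]-1$, which is exactly the singleton case of the paper's Lemma~\ref{lem:iterative_step}.
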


Denote the minimum nonzero conditional probability as $$
\pi_{\textrm{min}}=\min_{1\leq i \leq N,\pi(x_i|\mathbf{x}_{-i})>0}\pi(x_i|\mathbf{x}_{-i}).
$$

The following lemma, which is needed to prove Proposition~\ref{prop:reachability}, gives an inequality between the number of visits of two sets of states in consecutive steps.
\begin{lemma} \label{lem:iterative_step}
For any integer $i\in [1,N]$ and two sets of states $\mathbb{X}, \mathbb{Y}\subseteq \mathcal{X}_+$ with a mapping $F:\mathbb{X}\rightarrow\mathbb{Y}$ that satisfies the following condition:
\begin{equation}
\forall \mathbf{x}\in\mathbb{X}, \mathbf{F(\mathbf{x})}_{-i}=\mathbf{x}_{-i},\quad \cup_{\mathbf{x}\in\mathbb{X}}F(\mathbf{x})=\mathbb{Y} \label{eqn:lemma_iterative_step_condion},
\end{equation}
we have that, for any $s\geq0$ and $ T>0$, the number of times $\mathbb{Y}$ is visited in the set of steps $C_i=\{t=kN+i:s\leq k\leq k+T-1\}$ is lower bounded by a function of the number of times $\mathbb{X}$ is visited in the previous steps $C_{i-1}=\{t=kN+i-1:s\leq k\leq k+T-1\}$ as:
\begin{equation}
\sum_{t\in C_i}\mathbb{I}\left[\mathbf{X}^{(t)}\in\mathbb{Y}\right] \geq \pi_{\textrm{min}} \sum_{t\in C_{i-1}}\mathbb{I}\left[\mathbf{X}^{(t)}\in\mathbb{X}\right] - |\mathbb{Y}|
\end{equation}
\end{lemma}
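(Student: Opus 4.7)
The idea is to reduce the lemma to the single-variable result, Lemma~\ref{lem:single_node}, one slice at a time. In a fully connected graph the Markov blanket of $X_i$ is all of $\mathbf{X}_{-i}$, so each weight $w_{i,\mathbf{v}}$ is indexed by $\mathbf{v}\in\{0,1\}^{N-1}$ and is updated at precisely those steps $t\in C_i$ at which $\mathbf{X}_{-i}^{(t-1)}=\mathbf{v}$ (equivalently $\mathbf{X}_{-i}^{(t)}=\mathbf{v}$, since only $X_i$ was moved). From the viewpoint of this single weight, its own update subsequence is exactly the single-node herding dynamics of a binary variable with bias $\pi(X_i=1\mid\mathbf{v})$, because the bias is fixed by the target and does not vary between visits. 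Setting
$$M(\mathbf{v})\;\defeq\;\sum_{t\in C_{i-1}}\mathbb{I}[\mathbf{X}_{-i}^{(t)}=\mathbf{v}],$$
Lemma~\ref{lem:single_node} then yields, for every $\mathbf{y}\in\mathcal{X}_+$ with $\mathbf{y}_{-i}=\mathbf{v}$,
$$\sum_{t\in C_i}\mathbb{I}[\mathbf{X}^{(t)}=\mathbf{y}]\ \geq\ M(\mathbf{v})\,\pi(X_i=y_i\mid\mathbf{v})-1\ \geq\ \pi_{\textrm{min}}\,M(\mathbf{v})-1.$$

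Summing this inequality over $\mathbf{y}\in\mathbb{Y}$ produces $\sum_{t\in C_i}\mathbb{I}[\mathbf{X}^{(t)}\in\mathbb{Y}]\geq \pi_{\textrm{min}}\sum_{\mathbf{y}\in\mathbb{Y}}M(\mathbf{y}_{-i})-|\mathbb{Y}|$, so it suffices to show $\sum_{\mathbf{y}\in\mathbb{Y}}M(\mathbf{y}_{-i})\geq\sum_{t\in C_{i-1}}\mathbb{I}[\mathbf{X}^{(t)}\in\mathbb{X}]$. Group states by their $-i$ coordinate, writing $\mathbb{X}_{\mathbf{v}}=\{\mathbf{x}\in\mathbb{X}:\mathbf{x}_{-i}=\mathbf{v}\}$ and $\mathbb{Y}_{\mathbf{v}}$ similarly. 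The assumption that $F$ preserves the $-i$ coordinate and covers $\mathbb{Y}$ implies $F$ restricts to a surjection $\mathbb{X}_{\mathbf{v}}\to\mathbb{Y}_{\mathbf{v}}$, so $|\mathbb{Y}_{\mathbf{v}}|\geq \mathbb{I}[\mathbb{X}_{\mathbf{v}}\neq\emptyset]$. Combining the identity $\sum_{\mathbf{y}\in\mathbb{Y}}M(\mathbf{y}_{-i})=\sum_{\mathbf{v}}|\mathbb{Y}_{\mathbf{v}}|M(\mathbf{v})$ with the trivial bound $\sum_{t\in C_{i-1}}\mathbb{I}[\mathbf{X}^{(t)}\in\mathbb{X}]\leq\sum_{\mathbf{v}:\mathbb{X}_{\mathbf{v}}\neq\emptyset}M(\mathbf{v})$ then delivers the required inequality and hence the lemma.

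\textbf{Expected main obstacle.} The only delicate step is verifying that Lemma~\ref{lem:single_node} genuinely applies to the subsequence of updates of $w_{i,\mathbf{v}}$: we need this weight to lie in its invariant interval $(\pi(X_i=1\mid\mathbf{v})-1,\pi(X_i=1\mid\mathbf{v})]$ at the start of the window and to be updated with the same bias each time it is touched. Both are granted: the initialization in Step~1 of Algorithm~\ref{alg:herdedGibbs} places each weight in its invariant interval and the invariance claim of Lemma~\ref{lem:single_node} propagates this forward, while the bias is by construction always $\pi(X_i=1\mid\mathbf{v})$ since $w_{i,\mathbf{v}}$ is only ever updated on the slice $\mathbf{x}_{-i}=\mathbf{v}$. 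Everything downstream is short bookkeeping on the slices $\mathbb{X}_{\mathbf{v}}$ and $\mathbb{Y}_{\mathbf{v}}$.
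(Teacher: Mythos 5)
Your proposal is correct and follows essentially the same route as the paper's proof: apply the single-variable Lemma~\ref{lem:single_node} to the consecutive update subsequence of each weight $w_{i,\mathbf{v}}$ (using full connectivity and the fact that $\mathbf{X}_{-i}$ is unchanged at step $i$), then sum over $\mathbb{Y}$ and use the covering property of $F$. The only cosmetic difference is that you organize the final counting by slices $\mathbb{X}_{\mathbf{v}},\mathbb{Y}_{\mathbf{v}}$ whereas the paper phrases it via the preimages $F^{-1}(\mathbf{y})$; the two bookkeeping arguments are interchangeable.
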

\begin{proof}
As a complement to Condition~\ref{eqn:lemma_iterative_step_condion}, we can define $F^{-1}$ as the inverse mapping from $\mathbb{Y}$ to subsets of $\mathbb{X}$ so that for any $\mathbf{y}\in\mathbb{Y}$, $\mathbf{x}\in F^{-1}(\mathbf{y})$, we have $\mathbf{x}_{-i}=\mathbf{y}_{-i}$, and $\cup_{\mathbf{y}\in\mathbb{Y}}F^{-1}(\mathbf{y})=\mathbb{X}$.

Consider any state $\mathbf{y}\in\mathbb{Y}$, when $\mathbf{y}$ is visited in $C_i$, the weight $w_{i,\mathbf{y}_{-i}}$ is active. Let us denote the set of all the steps in $[sN+1, s(N+T)+N]$ when $w_{i,\mathbf{y}_{-i}}$ is active by $C_i(\mathbf{y}_{-i})$, that is, $C_i(\mathbf{y}_{-i})=\{t:t\in C_i, \mathbf{X}^{(t)}_{-i}=\mathbf{y}_{-i}\}$. Applying Lemma \ref{lem:single_node} we get
\begin{equation}
\sum_{t\in C_i}\mathbb{I}\left[\mathbf{X}^{(t)}=\mathbf{y}\right] \geq \pi(y_i|\mathbf{y}_{-i}) |C_i(\mathbf{y}_{-i})|-1 \geq \pi_{\textrm{min}} |C_i(\mathbf{y}_{-i})|-1. \label{eqn:lemma_iterative_step_single_state}
\end{equation}
Since the variables $\mathbf{X}_{-i}$ are not changed at steps in $C_i$, we have
\begin{equation}
|C_i(\mathbf{y}_{-i})|=\sum_{t\in C_{i-1}}\mathbb{I}\left[\mathbf{X}^{(t)}_{-i}=\mathbf{y}_{-i}\right] \geq \sum_{t\in C_{i-1}}\mathbb{I}\left[\mathbf{X}^{(t)}\in F^{-1}(\mathbf{y})\right].
\end{equation}
Combining the fact that $\cup_{\mathbf{y}\in\mathbb{Y}}F^{-1}(\mathbf{y})=\mathbb{X}$ and summing up both sides of Equation~\ref{eqn:lemma_iterative_step_single_state} over $\mathbb{Y}$ proves the lemma:
\begin{equation}
\sum_{t\in C_i}\mathbb{I}\left[\mathbf{X}^{(t)}\in\mathbb{Y}\right] \geq \sum_{\mathbf{y} \in\mathbb{Y}} \left(\pi_{\textrm{min}} \sum_{t\in C_{i-1}}\mathbb{I}\left[\mathbf{X}^{(t)}\in F^{-1}(\mathbf{y})\right] - 1 \right) \geq \pi_{\textrm{min}} \sum_{t\in C_{i-1}}\mathbb{I}\left[\mathbf{X}^{(t)}\in\mathbb{X}\right] - |\mathbb{Y}|.
\end{equation}
\end{proof}

\begin{remark}
A fully connected graph is a necessary condition for the application of Lemma \ref{lem:single_node} in the proof. If a graph is not fully connected ($N(i)\neq -i$), a weight $w_{i, \mathbf{y}_{N(i)}}$ may be shared by multiple full conditioning states. In this case $C_i(\mathbf{y}_{-i})$ is no longer a consecutive sequence of times when the weight is updated, and Lemma \ref{lem:single_node} does not apply here.
\end{remark}

Now let us prove Proposition \ref{prop:reachability} by iteratively applying Lemma \ref{lem:iterative_step}.

\begin{proof}[Proof of Proposition \ref{prop:reachability}]
Because the corresponding Gibbs sampler is irreducible and any Gibbs sampler is aperiodic, there exists a constant $t^* > 0$ such that for any state $\mathbf{y}\in\mathcal{X}_+$, and any step in a sweep, $i$, we can find a path of length $t^*$ for any state $\mathbf{x}\in\mathcal{X}_+$ with a positive transition probability, $Path(\mathbf{x})=(\mathbf{x}=\mathbf{x}(0), \mathbf{x}(1), \dots, \mathbf{x}(t^*)=\mathbf{y})$, to connect from $\mathbf{x}$ to $\mathbf{y}$, where each step of the path follows the Gibbs updating scheme. For a strictly positive distribution, the minimum value of $t^*$ is $N$.

Denote $\tau^*=\lceil t^*/N \rceil$ and the $j^{th}$ element of the path $Path(\mathbf{x})$ as $Path(\mathbf{x}, j)$. We can define $t^*+1$ subsets $S_j\subseteq \mathcal{X}_+, 0\leq j\leq t^*$ as the union of all the $j^{th}$ states in the path from any state in $\mathcal{X}_+$:
$$
S_j = \cup_{\mathbf{x}\in\mathcal{X}_+}Path(\mathbf{x}, j)
$$
By definition of these paths, we know $S_0=\mathcal{X}_+$ and $S_{t^*}=\{\mathbf{y}\}$, and there exits an integer $i(j)$ and a mapping $F_j:S_{j-1}\rightarrow S_j,\forall j$ that satisfy the condition in Lemma \ref{lem:iterative_step} ($i(j)$ is the index of the variable to be updated, and the mapping is defined by the transition path). Also notice that any state in $S_j$ can be different from $\mathbf{y}$ by at most $\min\{N,t^*-j\}$ variables, and therefore $|S_j|\leq 2^{\min\{N,t^*-j\}}$.

Let us apply Lemma \ref{lem:iterative_step} recursively from $j=t^*$ to $1$ as
\begin{align}
\sum_{k=s}^{s+T-1}\mathbb{I}\left[\mathbf{X}^{(t=Nk+i)}=\mathbf{y}\right] &\geq
\sum_{k=s+\tau^*}^{s+T-1}\mathbb{I}\left[\mathbf{X}^{(t=Nk+i)}=\mathbf{y}\right] \nonumber\\
&= \sum_{k=s+\tau^*}^{s+T-1}\mathbb{I}\left[\mathbf{X}^{(t=Nk+i)}\in S_{t^*}\right] \nonumber\\
&\geq \pi_{\textrm{min}} \sum_{k=s+\tau^*}^{s+T-1}\mathbb{I}\left[\mathbf{X}^{(t=Nk+i-1)}\in S_{t^*-1}\right]-|S_{t^*}| \nonumber\\
&\geq \cdots \nonumber\\
&\geq \pi_{\textrm{min}}^{t^*} \sum_{k=s+\tau^*}^{s+T-1}\mathbb{I}\left[\mathbf{X}^{(t=Nk+i-t^*)}\in S_0=\mathcal{X}_+\right]-\sum_{j=0}^{t^*-1}\pi_{\textrm{min}}^{j}|S_{t^*-j}| \nonumber\\
&\geq \pi_{\textrm{min}}^{t^*}(T-\tau^*)-\sum_{j=0}^{t^*-1}\pi_{\textrm{min}}^{j} 2^{\min\{N,j\}}.
\end{align}
The proof is concluded by choosing the constants
\begin{equation}
l = \pi_{\textrm{min}}^{t^*},\quad B=\tau^*\pi_{\textrm{min}}^{t^*}+\sum_{j=0}^{t^*-1}\pi_{\textrm{min}}^{j} 2^{\min\{N,j\}}. \label{eqn:lB}
\end{equation}

\end{proof}

\subsection{Herded Gibbs's Transition Kernel $\tilde{\mathcal{T}}^{(\tau)}_T$ is an Approximation to $\mathcal{T}$}

The following proposition shows that $\tilde{\mathcal{T}}^{(\tau)}_T$ is an approximation to the regular Gibbs sampler's transition kernel $\mathcal{T}$ with an error of $O(1/T)$.

\begin{proposition} \label{prop:err_onestep}
For a fully connected graph, if the herded Gibbs has a fixed scanning order and the corresponding Gibbs sampling Markov chain is irreducible, then for any $\tau\geq 0$, $T\geq T^*\defeq \frac{2B}{l}$ where $l$ and $B$ are the constants in Proposition \ref{prop:reachability}, the following inequality holds:
\begin{equation}
\|\tilde{\mathcal{T}}^{(\tau)}_{T} - \mathcal{T}\|_{\infty} \leq \frac{4N}{lT}
\end{equation}
\end{proposition}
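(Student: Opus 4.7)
}
The plan is to first bound the error of each single-variable herded Gibbs kernel $\tilde{\mathcal{T}}^{(\tau)}_{T,i}$ relative to the regular Gibbs kernel $\mathcal{T}_i$, and then lift this bound to the whole sweep by a standard telescoping argument. Both $\tilde{\mathcal{T}}^{(\tau)}_{T,i}$ and $\mathcal{T}_i$ are row-stochastic (for the herded kernel this follows because $\sum_{y_i} N_{\textrm{num}} = N_{\textrm{den}}$, since between steps $i-1$ and $i$ only variable $i$ changes). They also share the same sparsity pattern: a row indexed by $\mathbf{x}$ has at most two nonzero entries, at $\mathbf{y}=(\mathbf{x}_{-i},0)$ and $\mathbf{y}=(\mathbf{x}_{-i},1)$. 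Therefore, once we bound $\|\tilde{\mathcal{T}}^{(\tau)}_{T,i} - \mathcal{T}_i\|_{\infty}$ by $\tfrac{4}{lT}$, telescoping
\[
\tilde{\mathcal{T}}^{(\tau)}_T - \mathcal{T} \;=\; \sum_{i=1}^{N} \tilde{\mathcal{T}}^{(\tau)}_{T,1}\cdots\tilde{\mathcal{T}}^{(\tau)}_{T,i-1}\bigl(\tilde{\mathcal{T}}^{(\tau)}_{T,i} - \mathcal{T}_i\bigr) \mathcal{T}_{i+1}\cdots\mathcal{T}_N
\]
and using submultiplicativity of $\|\cdot\|_\infty$ together with the fact that stochastic matrices have $\|\cdot\|_\infty = 1$ immediately yields $\|\tilde{\mathcal{T}}^{(\tau)}_T - \mathcal{T}\|_\infty \leq \tfrac{4N}{lT}$.

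The heart of the argument is the single-step bound. Fix $i$ and a conditioning configuration $\mathbf{x}_{-i}$. If $\pi(\mathbf{x}_{-i})=0$, the two kernels agree by construction, so assume $\pi(\mathbf{x}_{-i})>0$. Consider the subsequence of global steps at which variable $i$ is updated while its neighbors are in state $\mathbf{x}_{-i}$; at these and only these moments is the weight $w_{i,\mathbf{x}_{-i}}$ activated. Because the graph is fully connected, ``the neighbors equal $\mathbf{x}_{-i}$'' coincides with ``$\mathbf{X}_{-i}=\mathbf{x}_{-i}$'', so this weight truly sees a contiguous (in its own clock) one-dimensional herding dynamics with target probability $\pi(X_i=1\mid\mathbf{x}_{-i})$. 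Applying Lemma \ref{lem:single_node} to that subsequence gives
\[
\bigl|N_{\textrm{num}} - \pi(y_i\mid\mathbf{x}_{-i})\,N_{\textrm{den}}\bigr| \;\leq\; 1
\]
for either choice of $y_i$, since the count of visits to $X_i=y_i$ among $N_{\textrm{den}}$ activations deviates from its herded target by at most one.

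It remains to lower-bound $N_{\textrm{den}}$, which is the number of times at step $i-1$ of some sweep $k\in[\tau,\tau+T-1]$ that the configuration $\mathbf{X}_{-i}^{(kN+i-1)}=\mathbf{x}_{-i}$ occurs. This count is a sum over all full states $\mathbf{x}\in\mathcal{X}_+$ extending $\mathbf{x}_{-i}$, and Proposition \ref{prop:reachability} furnishes at least one such $\mathbf{x}$ (since $\pi(\mathbf{x}_{-i})>0$) visited at least $lT - B$ times. Hence $N_{\textrm{den}} \geq lT - B \geq lT/2$ whenever $T \geq T^* = 2B/l$. Combined with the previous inequality we obtain
\[
\Bigl| P^{(\tau)}_{T,i}(y_i\mid\mathbf{x}_{-i}) - \pi(y_i\mid\mathbf{x}_{-i}) \Bigr| \;\leq\; \frac{1}{N_{\textrm{den}}} \;\leq\; \frac{2}{lT}.
\]
Summing this entrywise bound over the two nonzero columns of each row of $\tilde{\mathcal{T}}^{(\tau)}_{T,i} - \mathcal{T}_i$ gives $\|\tilde{\mathcal{T}}^{(\tau)}_{T,i} - \mathcal{T}_i\|_{\infty}\leq 4/(lT)$, and telescoping as above completes the proof.

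The only delicate point I expect is the identification of the ``weight-local clock'' as a genuine one-variable herding sequence; this is exactly where full connectivity is used (mirroring the remark after Lemma \ref{lem:iterative_step}), because only then does each full conditioning state correspond to a unique weight that is updated precisely at its activation times, so Lemma \ref{lem:single_node} applies verbatim. Everything else (telescoping, stochasticity, invoking Proposition \ref{prop:reachability} to guarantee the denominator grows linearly) is routine.
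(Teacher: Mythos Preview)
Your proposal is correct and follows essentially the same route as the paper: bound each single-step kernel by applying Lemma~\ref{lem:single_node} to the subsequence where the weight $w_{i,\mathbf{x}_{-i}}$ is active (using full connectivity so this subsequence is exactly the one-variable herding dynamics), lower-bound the denominator via Proposition~\ref{prop:reachability} to get $\|\tilde{\mathcal{T}}^{(\tau)}_{T,i}-\mathcal{T}_i\|_\infty\leq 4/(lT)$, and then telescope using the unit operator norm of stochastic matrices. The only cosmetic difference is that the paper writes the telescoping with the $\mathcal{T}_j$ factors on the left and the $\tilde{\mathcal{T}}^{(\tau)}_{T,j}$ factors on the right, opposite to yours, which is immaterial.
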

\begin{proof}

When $\mathbf{x}\not\in\mathcal{X}_+$, we have the equality $\tilde{\mathcal{T}}^{(\tau)}_{T,i}(\mathbf{x},\mathbf{y})=\mathcal{T}_i(\mathbf{x},\mathbf{y})$ by definition. When $\mathbf{x}\in\mathcal{X}_+$ but $\mathbf{y}\not\in\mathcal{X}_+$, then $N_{\textrm{den}}=0$ (see the notation of $\tilde{\mathcal{T}}^{(\tau)}_{T}$ for definition of $N_{\textrm{den}}$) as $\mathbf{y}$ will never be visited and thus $\tilde{\mathcal{T}}^{(\tau)}_{T,i}(\mathbf{x},\mathbf{y})=0=\mathcal{T}_i(\mathbf{x},\mathbf{y})$ also holds. Let us consider the entries in $\tilde{\mathcal{T}}^{(\tau)}_{T,i}(\mathbf{x},\mathbf{y})$ with $\mathbf{x},\mathbf{y}\in\mathcal{X}_+$ in the following.

Because $\mathbf{X}_{-i}$ is not updated at $i^{th}$ step of every sweep, we can replace $i-1$ in the definition of $N_{\textrm{den}}$ by $i$ and get
$$
N_{\textrm{den}} = \sum_{k=\tau}^{\tau+T-1} \mathbb{I}(\mathbf{X}_{-i}^{(kN + i)}=\mathbf{x}_{-i}).
$$
Notice that the set of times $\{t = kN + i : \tau \leq k \leq \tau + T - 1, \mathbf{X}_{-i}^{t}=\mathbf{x}_{-i})\}$, whose size is $N_{\textrm{den}}$, is a consecutive set of times when $w_{i, \mathbf{x}_{-i}}$ is updated. By Lemma \ref{lem:single_node}, we obtain a bound for the numerator
\begin{align}
& N_{\textrm{num}} \in [N_{\textrm{den}} \pi(X_i=y_i | \mathbf{x}_{-i})-1, N_{\textrm{den}} \pi(X_i=y_i | \mathbf{x}_{-i})+1] \Leftrightarrow \nonumber\\
& |P^{(\tau)}_{T, i}(X_i = y_i | \mathbf{x}_{-i}) - \pi(X_i=y_i | \mathbf{x}_{-i})| = |\frac{N_{\textrm{num}}}{N_{\textrm{den}}} - \pi(X_i=y_i | \mathbf{x}_{-i})| \leq \frac{1}{N_{\textrm{den}}}.  \label{eqn:num}
\end{align}
Also by Proposition \ref{prop:reachability}, we know every state in $\mathcal{X}_+$ is visited at a linear rate, there hence exist constants $l > 0$ and $B > 0$, such that the number of occurrence of any conditioning state $\mathbf{x}_{-i}$, $N_{\textrm{den}}$, is bounded by
\begin{equation}
N_{\textrm{den}} \geq \sum_{k=\tau}^{\tau+T-1} \mathbb{I}(\mathbf{X}^{(kN + i)}=\mathbf{x}) \geq lT-B \geq \frac{l}{2}T,\quad \forall T \geq \frac{2 B}{l}. \label{eqn:den}
\end{equation}
Combining equations (\ref{eqn:num}) and (\ref{eqn:den}), we obtain
\begin{align}
|P^{(\tau)}_{T, i}(X_i = y_i | \mathbf{x}_{-i}) - \pi(X_i=y_i | \mathbf{x}_{-i})| \leq \frac{2}{lT},\quad \forall T \geq \frac{2 B}{l}.
\end{align}
Since the matrix $\tilde{\mathcal{T}}^{(\tau)}_{T, i}$ and $\mathcal{T}_i$ differ only at those elements where $\mathbf{x}_{-i} = \mathbf{y}_{-i}$, we can bound the $L_1$ induced norm of the transposed matrix of their difference by
\begin{align}
\|(\tilde{\mathcal{T}}^{(\tau)}_{T, i} - \mathcal{T}_i)^T\|_1 &= \max_{\mathbf{x}} \sum_{\mathbf{y}} |\tilde{\mathcal{T}}^{(\tau)}_{T, i}(\mathbf{x},\mathbf{y}) - \mathcal{T}_i(\mathbf{x},\mathbf{y})| \nonumber\\
&= \max_{\mathbf{x}} \sum_{y_i} |P^{(\tau)}_{T, i}(X_i = y_i | \mathbf{x}_{-i}) - \pi(X_i=y_i | \mathbf{x}_{-i})| \nonumber\\
& \leq \frac{4}{lT},\quad \forall T \geq \frac{2 B}{l} \label{eqn:diff_Ti}
\end{align}
Observing that both $\tilde{\mathcal{T}}^{(\tau)}_{T}$ and $\mathcal{T}$ are multiplications of $N$ component transition matrices, and the transition matrices, $\tilde{\mathcal{T}}^{(\tau)}_{T}$ and $\mathcal{T}_i$, have a unit $L_1$ induced norm as:
\begin{align}
\|(\tilde{\mathcal{T}}^{(\tau)}_{T, i})^T\|_1 &= \max_{\mathbf{x}} \sum_{\mathbf{y}} |\tilde{\mathcal{T}}^{(\tau)}_{T, i}(\mathbf{x},\mathbf{y})| = \max_{\mathbf{x}} \sum_{\mathbf{y}} P^{(\tau)}_{T, i}(X_i = y_i | \mathbf{x}_{-i}) = 1 \label{eqn:unit_norm_tT} \\
\|(\mathcal{T}_i)^T\|_1 &= \max_{\mathbf{x}} \sum_{\mathbf{y}} |\mathcal{T}_i(\mathbf{x},\mathbf{y})| = \max_{\mathbf{x}} \sum_{\mathbf{y}} P(X_i = y_i | \mathbf{x}_{-i}) = 1 \label{eqn:unit_norm_T}
\end{align}
we can further bound the $L_1$ norm of the difference, $(\tilde{\mathcal{T}}^{(\tau)}_{T}-\mathcal{T})^T$. Let $P \in \mathbb{R}^N$ be any vector with nonzero norm. Using the triangular inequality, the difference of the resulting vectors after applying $\tilde{\mathcal{T}}^{(\tau)}_{T}$ and $\mathcal{T}$ is bounded by
\begin{align}
\|P(\tilde{\mathcal{T}}^{(\tau)}_{T} - \mathcal{T})\|_1 = & \|P\tilde{\mathcal{T}}^{(\tau)}_{T,1}\dots \tilde{\mathcal{T}}^{(\tau)}_{T,N} - P\mathcal{T}\dots \mathcal{T}_N\|_1 \nonumber \\
\leq & \|P\tilde{\mathcal{T}}^{(\tau)}_{T,1} \tilde{\mathcal{T}}^{(\tau)}_{T,2}\dots \tilde{\mathcal{T}}^{(\tau)}_{T,N} 
- P\mathcal{T}_1 \tilde{\mathcal{T}}^{(\tau)}_{T,2}\dots \tilde{\mathcal{T}}^{(\tau)}_{T,N}\|_1 + \nonumber\\
&\|P\mathcal{T}_1 \tilde{\mathcal{T}}^{(\tau)}_{T,2} \tilde{\mathcal{T}}^{(\tau)}_{T,3}\dots \tilde{\mathcal{T}}^{(\tau)}_{T,N} 
- P\mathcal{T}_1 \mathcal{T}_2 \tilde{\mathcal{T}}^{(\tau)}_{T,3} \dots \tilde{\mathcal{T}}^{(\tau)}_{T,N}\|_1 + \nonumber\\
& \dots \nonumber\\
& \|P\mathcal{T}_1 \dots \mathcal{T}_{N-1} \tilde{\mathcal{T}}^{(\tau)}_{T,N} 
- P\mathcal{T}_1 \dots \mathcal{T}_{N-1} \mathcal{T}_N \|_1
\end{align}
where the $i$'th term is
\begin{align}
\|P\mathcal{T}_1 \dots \mathcal{T}_{i-1} (\tilde{\mathcal{T}}^{(\tau)}_{T,i} - \mathcal{T}_i) \tilde{\mathcal{T}}^{(\tau)}_{T,i+1}\dots \tilde{\mathcal{T}}^{(\tau)}_{T,N} \|_1 
& \leq \|P\mathcal{T}_1 \dots \mathcal{T}_{i-1} (\tilde{\mathcal{T}}^{(\tau)}_{T,i} - \mathcal{T}_i)\|_1 & \text{(Unit $L_1$ norm, Eqn.~\ref{eqn:unit_norm_tT})} \nonumber\\
& \leq \|P\mathcal{T}_1 \dots \mathcal{T}_{i-1}\|_1 \frac{4}{lT} & \text{(Eqn.~\ref{eqn:diff_Ti})} \nonumber\\
& \leq \|P\|_1 \frac{4}{lT} & \text{(Unit $L_1$ norm, Eqn.~\ref{eqn:unit_norm_T})}
\end{align}
Consequently, we get the $L_1$ induced norm of $(\tilde{\mathcal{T}}^{(\tau)}_{T} - \mathcal{T})^T$ as
\begin{equation}
\|(\tilde{\mathcal{T}}^{(\tau)}_{T} - \mathcal{T})^T\| = \max_P \frac{\|P(\tilde{\mathcal{T}}^{(\tau)}_{T} - \mathcal{T})\|_1}{\|P\|_1} \leq \frac{4N}{lT},\quad \forall T \geq \frac{2 B}{l},
\end{equation}
\end{proof}

\subsection{Proof of Theorem \ref{thm:convergenceHerdedGibbs}}
When we initialize the herded Gibbs and regular Gibbs with the same distribution (see Figure \ref{fig:distr_dist}), since the transition kernel of herded Gibbs is an approximation to regular Gibbs and the distribution of regular Gibbs converges to the invariant distribution, we expect that herded Gibbs also approaches the invariant distribution.


\begin{proof}[Proof of Theorem \ref{thm:convergenceHerdedGibbs}]
Construct an auxiliary regular Gibbs sampling Markov chain initialized with $\pi^{(0)}(\mathbf{X})=P^{(\tau)}_T(\mathbf{X})$ and the same scanning order as herded Gibbs. As $\eta<1$, the Gibbs Markov chain has uniform geometric convergence rate as shown in Equation (\ref{eqn:dvgibbs_inv}). 

Also, the Gibbs Markov chain must be irreducible due to $\eta<1$ and therefore Proposition \ref{prop:err_onestep} applies here. We can bound the distance between the distributions of herded Gibbs after one sweep of all variables, $P_T^{(\tau+1)}$, and the distribution after one sweep of regular Gibbs sampling, $\pi^{(1)}$ by
\begin{align}
& d_v(P_T^{(\tau+1)} - \pi^{(1)}) = d_v(\pi^{(0)}(\tilde{\mathcal{T}}^{(\tau)}_{T} - \mathcal{T})) = \frac{1}{2}\|\pi^{(0)}(\tilde{\mathcal{T}}^{(\tau)}_{T} - \mathcal{T})\|_1 \nonumber\\
& \leq \frac{2N}{lT} \|\pi^{(0)}\|_1 = \frac{2N}{lT},\quad \forall T \geq T^*, \tau \geq 0. \label{eqn:dvgibbs_herd}
\end{align}

Now we study the change of discrepancy between $P_T^{(\tau)}$ and $\pi$ as a function as $\tau$.

Applying the triangle inequality of $d_v$:
\begin{align}
& d_v(P_T^{(\tau+1)} - \pi) = d_v(P_T^{(\tau+1)} - \pi^{(1)} + \pi^{(1)} - \pi) \leq d_v(P_T^{(\tau+1)} - \pi^{(1)}) + d_v(\pi^{(1)} - \pi) \nonumber\\
& \leq \frac{2N}{lT} + \eta d_v(P_T^{(\tau)} - \pi),\quad \forall T \geq T^*, \tau \geq 0. \label{eqn:tau+1}
\end{align}
The last inequality follows Equations (\ref{eqn:dvgibbs_inv}) and (\ref{eqn:dvgibbs_herd}). When the sample distribution is outside a neighborhood of $\pi$, $\mathcal{B}_{\epsilon_1}(\pi)$, with $\epsilon_1=\frac{4N}{(1 - \eta)lT}$, i.e.
\begin{equation}
d_v(P_T^{(\tau)} - \pi) \geq \frac{4N}{(1 - \eta)lT}, \label{eqn:case1}
\end{equation}
we get a geometric convergence rate toward the invariant distribution by combining the two equations above:
\begin{equation}
d_v(P_T^{(\tau+1)} - \pi) \leq \frac{1-\eta}{2}d_v(P_T^{(\tau)} - \pi) + \eta d_v(P_T^{(\tau)} - \pi) = \frac{1+\eta}{2}d_v(P_T^{(\tau)} - \pi). \label{eqn:geo_herd}
\end{equation}
So starting from $\tau=0$, we have a burn-in period for herded Gibbs to enter $\mathcal{B}_{\epsilon_1}(\pi)$ in a finite number of rounds. Denote the first time it enters the neighborhood by $\tau'$. According to the geometric convergence rate in Equations \ref{eqn:geo_herd} and $d_v(P_T^{(0)} - \pi) \leq 1$
\begin{equation}
\tau' \leq \left\lceil \log_{\frac{1+\eta}{2}}(\frac{\epsilon_1}{d_v(P_T^{(0)} - \pi)}) \right\rceil \leq \left\lceil \log_{\frac{1+\eta}{2}}(\epsilon_1) \right\rceil = \lceil \tau^*(T) \rceil. \label{eqn:tau_T}
\end{equation}
After that burn-in period, the herded Gibbs sampler will stay within a smaller neighborhood, $\mathcal{B}_{\epsilon_2}(\pi)$, with $\epsilon_2=\frac{1+\eta}{1-\eta}\frac{2N}{l T}$, i.e.
\begin{equation}
d_v(P_T^{(\tau)} - \pi) \leq \frac{1+\eta}{1-\eta}\frac{2N}{l T},\quad \forall \tau > \tau'. \label{eqn:bound}
\end{equation}
This is proved by induction:
\begin{enumerate}
\item Equation (\ref{eqn:bound}) holds at $\tau=\tau'+1$. This is because $P_T^{(\tau')}\in \mathcal{B}_{\epsilon_1}(\pi)$ and following Eqn.~\ref{eqn:tau+1} we get
\begin{equation}
d_v(P_T^{(\tau'+1)} - \pi) \leq \frac{2N}{lT} + \eta \epsilon_1 = \epsilon_2
\end{equation}
\item For any $\tau \geq \tau'+2$, assume $P_T^{(\tau-1)} \in \mathcal{B}_{\epsilon_2}(\pi)$. Since $\epsilon_2 < \epsilon_1$, $P_T^{(\tau-1)}$ is also in the ball $\mathcal{B}_{\epsilon_1}(\pi)$. We can apply the same computation as when $\tau = \tau'+1$ to prove $d_v(P_T^{(\tau)} - \pi) \leq \epsilon_2$.
So inequality (\ref{eqn:bound}) is always satisfied by induction.
\end{enumerate}
Consequently, Theorem \ref{thm:convergenceHerdedGibbs} is proved when combining (\ref{eqn:bound}) with the inequality $\tau' \leq \lceil \tau^*(T) \rceil$ in Equation( \ref{eqn:tau_T}).
\end{proof}

\begin{remark}
Similarly to the regular Gibbs sampler, the herded Gibbs sampler also has a burn-in period with geometric convergence rate. After that, the distribution discrepancy is in the order of $O(1/T)$, which is faster than the regular Gibbs sampler. Notice that the length of the burn-in period depends on $T$, specifically as a function of $\log(T)$.
\end{remark}
\begin{remark}
Irrationality is not required to prove the convergence on a fully-connected graph.
\end{remark}

\begin{corollary} \label{cor:logT_T_convergence}
When the conditions of Theorem \ref{thm:convergenceHerdedGibbs} hold, and we start collecting samples at the end of every sweep from the beginning, the error of the sample distribution is bounded by:
\begin{equation}
d_v(P_T^{(\tau=0)} - \pi) \leq \frac{\lambda + \tau^*(T)}{T} = O(\frac{\log(T)}{T}),\quad \forall T\geq T^*+\tau^*(T^*)
\end{equation}
\end{corollary}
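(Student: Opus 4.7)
\textbf{Proof plan for Corollary \ref{cor:logT_T_convergence}.} The plan is to decompose the empirical distribution $P_T^{(0)}$ into a short ``burn-in'' piece and a ``post-burn-in'' piece, apply the trivial bound $d_v \le 1$ to the former, and invoke Theorem \ref{thm:convergenceHerdedGibbs} on the latter. Concretely, set $L = \lceil \tau^*(T) \rceil$ and write
\begin{equation}
P_T^{(0)} \;=\; \frac{L}{T}\, P_L^{(0)} \;+\; \frac{T-L}{T}\, P_{T-L}^{(L)},
\end{equation}
which is immediate from Definition \ref{def:hGDist} by splitting the sum over sweeps at index $L$. Since $d_v(\cdot,\cdot)$ is convex in each argument, the triangle inequality gives
\begin{equation}
d_v\bigl(P_T^{(0)} - \pi\bigr) \;\le\; \frac{L}{T}\, d_v\bigl(P_L^{(0)} - \pi\bigr) \;+\; \frac{T-L}{T}\, d_v\bigl(P_{T-L}^{(L)} - \pi\bigr).
\end{equation}

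For the first term I would just use the crude estimate $d_v \le 1$ (total variation is always at most $1$ between probability distributions), producing the contribution $L/T \le (\tau^*(T)+1)/T$, which is $O(\log(T)/T)$ and matches the $\tau^*(T)/T$ summand in the target bound (up to an absorbable constant).

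For the second term I would apply Theorem \ref{thm:convergenceHerdedGibbs} with sample size $T-L$ and starting sweep $\tau = L$. The two hypotheses to verify are $T - L \ge T^*$ and $L > \tau^*(T-L)$. The second is immediate because $\tau^*$ is monotone increasing in its argument (it is a logarithm of $T$) and $L = \lceil \tau^*(T) \rceil \ge \tau^*(T) \ge \tau^*(T-L)$. The first follows from the standing assumption $T \ge T^* + \tau^*(T^*)$ together with the fact that $L = \lceil \tau^*(T) \rceil$ grows only logarithmically, so $T - L \ge T^*$; this is the only spot where the hypothesis $T \ge T^* + \tau^*(T^*)$ is really used. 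Once the hypotheses are in place, Theorem \ref{thm:convergenceHerdedGibbs} yields
\begin{equation}
d_v\bigl(P_{T-L}^{(L)} - \pi\bigr) \;\le\; \frac{\lambda}{T-L},
\end{equation}
and the prefactor $(T-L)/T$ in the decomposition cancels the $T-L$ in the denominator, contributing exactly $\lambda/T$.

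Adding the two pieces gives $d_v(P_T^{(0)} - \pi) \le (\lambda + \tau^*(T))/T$, which is $O(\log(T)/T)$ since $\tau^*(T) = \Theta(\log T)$. The step I expect to be the main (minor) technical obstacle is the bookkeeping around the ceiling $\lceil \tau^*(T) \rceil$ and verifying that $T - L \ge T^*$ cleanly under the stated hypothesis; everything else is a direct convex-combination/triangle-inequality argument using the theorem as a black box.
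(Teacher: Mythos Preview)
Your approach is essentially the paper's: split $P_T^{(0)}$ as a convex combination of a burn-in empirical distribution and a post-burn-in one, bound the first by $1$, and apply Theorem~\ref{thm:convergenceHerdedGibbs} to the second so that the $(T-L)/T$ prefactor cancels the denominator and leaves $\lambda/T$.

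The one genuine slip is precisely the spot you flagged. With your choice $L=\lceil \tau^*(T)\rceil$, the claim $T-L\ge T^*$ does \emph{not} follow from the hypothesis $T\ge T^*+\tau^*(T^*)$: since $\tau^*$ is increasing, $\tau^*(T)>\tau^*(T^*)$, so at (and slightly above) the threshold $T=T^*+\tau^*(T^*)$ you actually have $T-\lceil\tau^*(T)\rceil<T^*$. The paper avoids this by choosing the split point \emph{implicitly}: it sets $t$ by $T=t+\tau^*(t)$, so the burn-in length is $\tau^*(t)$ (not $\tau^*(T)$) and the post-burn-in length is exactly $t$. Then $T\ge T^*+\tau^*(T^*)$ together with monotonicity of $t\mapsto t+\tau^*(t)$ gives $t\ge T^*$ immediately, and the final bound $\tau^*(t)/T+\lambda/T\le(\tau^*(T)+\lambda)/T$ follows since $t\le T$. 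Swapping your explicit $L=\lceil\tau^*(T)\rceil$ for this implicit $L=\tau^*(t)$ is all that is needed; the rest of your argument goes through verbatim.
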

\begin{proof}
Since $\tau^*(T)$ is a monotonically increasing function of $T$, for any $T\geq T^*+\tau^*(T^*)$, we can find a number $t$ so that
$$
T = t+\tau^*(t), t \geq T^*.
$$
Partition the sample sequence $S_{0,T}=\{\mathbf{X}^{(kN)}:0\leq k <T\}$ into two parts: the burn-in period $S_{0,\tau^*(t)}$ and the stable period $S_{\tau^*(t),T}$. The discrepancy in the burn-in period is bounded by $1$ and according to Theorem \ref{thm:convergenceHerdedGibbs}, the discrepancy in the stable period is bounded by 
$$
d_v(\tilde{P}(S_{t,T}) - \pi) \leq \frac{\lambda}{t}.
$$
Hence, the discrepancy of the whole set $S_{0,T}$ is bounded by
\begin{align}
& d_v(\tilde{P}(S_{0,T}) - \pi) = d_v\left(\frac{\tau^*(t)}{T}\tilde{P}(S_{0,\tau^*(t)})+\frac{t}{T}\tilde{P}(S_{\tau^*(t),T}) - \pi\right) \nonumber\\
& \leq d_v\left(\frac{\tau^*(t)}{T}(\tilde{P}(S_{0,\tau^*(t)}) - \pi\right) + d_v\left(\frac{t}{T}(\tilde{P}(S_{\tau^*(t),T}) - \pi\right) \nonumber\\
& \leq \frac{\tau^*(t)}{T} d_v(\tilde{P}(S_{0,\tau^*(t)}) - \pi) + \frac{t}{T} d_v(\tilde{P}(S_{\tau^*(t),T}) - \pi) \nonumber\\
& \leq \frac{\tau^*(t)}{T} \cdot 1 + \frac{t}{T} \frac{\lambda}{t} \leq \frac{\tau^*(T) + \lambda}{T}.
\end{align}
\end{proof}

\end{document}